\newtheorem{lemma}{Lemma}
\newtheorem{theorem}{Theorem}
\newtheorem{definition}{Definition}
\newtheorem{assumption}{Assumption}
\newtheorem{remark}{Remark}
\theoremstyle{definition}
\DeclareMathOperator*{\argmax}{argmax}
\title{On the Consistency of Quick Shift}
\author{Heinrich Jiang \\
Google Inc.\\
1600 Amphitheatre Parkway, Mountain View, CA 94043\\
\texttt{heinrich.jiang@gmail.com}}
\begin{document}

\maketitle

\begin{abstract}
Quick Shift is a popular mode-seeking and clustering algorithm. 
We present finite sample statistical consistency guarantees for Quick Shift on mode and cluster recovery
under mild distributional assumptions.
We then apply our results to construct a consistent modal regression algorithm.
\end{abstract} 

\section{Introduction}
Quick Shift \cite{vedaldi2008quick} is a clustering and mode-seeking procedure that has received much attention  
in computer vision and related areas.
It is simple and proceeds as follows: it moves each sample to its closest sample with a higher empirical density if one exists in a $\tau$ radius ball, where the empirical density is taken to be the Kernel Density Estimator (KDE). 
The output of the procedure can thus be seen as a graph whose vertices are the sample points and a directed edge
from each sample to its next point if one exists. 
Furthermore, it can be seen that Quick Shift partitions the samples into trees which can be taken as the final clusters, and the root of each such tree is an estimate of a local maxima. 

Quick Shift was designed as an alternative to the better known mean-shift procedure \cite{cheng1995mean, comaniciu2002mean}.
Mean-shift performs a gradient ascent of the KDE starting at each sample until $\epsilon$-convergence. The samples that
correspond to the same points of convergence are in the same cluster and the points of convergence are taken to be the estimates of the modes.
Both procedures aim at clustering the data points by incrementally hill-climbing to a mode in the underlying density.
Some key differences are that Quick Shift restricts the steps to sample points and has the extra $\tau$ parameter.
In this paper, we show that Quick Shift can surprisingly attain strong statistical guarantees without the second-order density assumptions required to analyze mean-shift. 

We prove that Quick Shift recovers the modes of an arbitrary multimodal density at a minimax optimal rate under mild nonparametric assumptions.
This provides an alternative to known procedures with similar statistical guarantees; however such procedures 
only recover the modes but fail to inform us how to assign the sample points to a mode which is critical for clustering.
Quick Shift on the other hand recovers both the modes and the clustering assignments with statistical consistency guarantees. 
Moreover, Quick Shift's ability to do all of this has been extensively validated in practice.

A unique feature of Quick Shift is that it has a segmentation parameter $\tau$ which allows practioners to merge clusters corresponding to
certain less salient modes of the distribution. In other words, if a local mode is not the maximizer of its $\tau$-radius neighborhood, then its corresponding cluster will become merged to that of another mode.
Current consistent mode-seeking procedures \cite{dasgupta2014optimal, jiang2017modal} 
fail to allow one to control such segmentation. We give guarantees on how Quick Shift does this given an arbitrary setting of $\tau$.

We show that Quick Shift can also be used to recover the cluster tree.
In cluster tree estimation, the known procedures with the strongest statistical consistency
guarantees include Robust Single Linkage (RSL) \cite{chaudhuri2010rates} and its variants e.g. \cite{kpotufe2011pruning, eldridge2015beyond}.
We show that Quick Shift attains similar guarantees.

Thus, Quick Shift, a simple and already popular procedure, can {\it simultaneously} recover the modes with segmentation tuning, provide clustering assignments to the appropriate mode,
and can estimate the cluster tree of an unknown density $f$ with the strong consistency guarantees.
No other procedure has been shown to have these properties.

Then we use Quick Shift to solve the modal regression problem \cite{chen2016nonparametric}, which involves estimating the {\it modes} of the conditional density $f(y|X)$ rather than the mean as in classical regression. Traditional approaches use a modified version of mean-shift. We provide an alternative using Quick Shift which has precise statistical consistency guarantees under much more mild assumptions.

\begin{figure}[H]
\begin{center}
\includegraphics[width=0.35\textwidth]{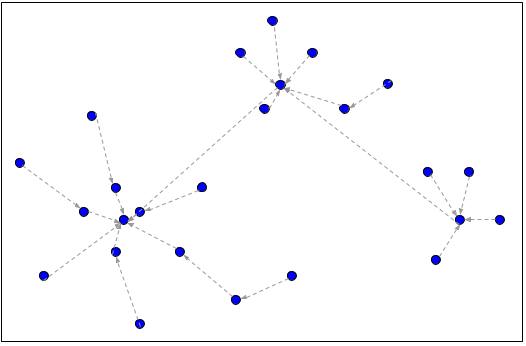}
\includegraphics[width=0.35\textwidth]{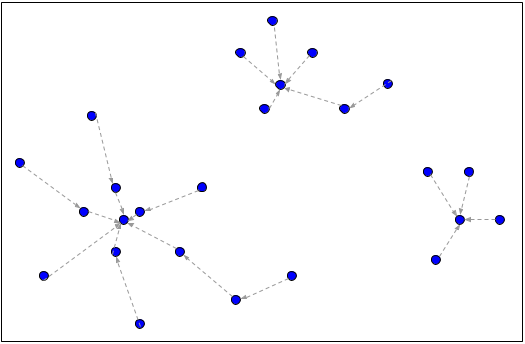}
\end{center}
\caption{Quick Shift example. {\bf Left}: $\tau = \infty$. The procedure returns one tree, whose head is the sample with highest empirical density.
{\bf Right}: $\tau$ set to a lower value. The edges with length greater than $\tau$ are no longer present when compared to the left. We are left with three clusters.}
\end{figure}

\section{Assumptions and Supporting Results}
\begin{algorithm}[tbh]
   \caption{Quick Shift}
   \label{alg:quickshift}
\begin{algorithmic}
	\STATE Input: Samples $X_{[n]} := \{x_1,...,x_n\}$, KDE bandwidth $h$, segmentation parameter $\tau > 0$.
	\STATE Initialize directed graph $G$ with vertices $\{x_1,...,x_n\}$ and no edges.
	\FOR{$i = 1$ to $n$}
	\IF{there exists $x \in X_{[n]}$ such that $\widehat{f}_h(x) > \widehat{f}_h(x_i)$ and $||x-x_i|| \le \tau$}
	\STATE Add to $G$ a directed edge from $x_i$ to $\text{argmin}_{x_j \in X_{[n]} : \widehat{f}_h(x_j) > \widehat{f}_h(x_i)} ||x_i - x_j||$.
	\ENDIF
	\ENDFOR
	\RETURN $G$.
\end{algorithmic}
\end{algorithm}
\vspace{-0.3cm}
\subsection{Setup}
Let $X_{[n]} = \{x_1,...,x_n\}$ be $n$ i.i.d. samples drawn from distribution $\mathcal{F}$ with density $f$ over the uniform measure on $\mathbb{R}^d$.

\begin{assumption} [H\"older Density] \label{assumption:holder}
$f$ is H\"older continuous on compact support $\mathcal{X} \subseteq \mathbb{R}^d$. i.e. $|f(x) - f(x')| \le C_\alpha||x-x'||^\alpha$ for all $x, x' \in \mathcal{X}$ and some $0 < \alpha \le 1$ and $C_{\alpha} > 0$.
\end{assumption}

\begin{definition} [Level Set]
The $\lambda$ level set of $f$ is defined as $L_f(\lambda) := \{x \in \mathcal{X}: f(x) \ge \lambda \}$.
\end{definition}
\begin{definition} [Hausdorff Distance]
$d_H(A, A') = \max \{ \sup_{x \in A} d(x, A'), \sup_{x \in A'} d(x, A) \}$,
where $d(x, A) := \inf_{x' \in A} ||x - x'||$.
\end{definition}

The next assumption says that the level sets are continuous w.r.t. the level in the following sense where we denote the $\epsilon$-interior of $A$ as $A^{\ominus \epsilon} := \{x \in A, \inf_{y \in \partial A} d(x, y) \ge \epsilon \}$ ($\partial A$ is the boundary of $A$):

\begin{assumption} [Uniform Continuity of Level Sets] \label{assumption:levelset_continuity}
For each $\epsilon > 0$, there exists $\delta > 0$ such that for $0 < \lambda \le \lambda' \le ||f||_\infty$ with $|\lambda - \lambda'| < \delta$, then
$L_f(\lambda)^{\ominus \epsilon} \subseteq L_f(\lambda')$.
\end{assumption}

\begin{remark}
Procedures that try to incrementally move points to
nearby areas of higher density will have difficulties in regions where there is little or no change in density.
The above assumption is a simple and mild formulation which ensures there are no such flat regions.
\end{remark}

\begin{remark}
Note that our assumptions are quite mild when compared to analyses of similar procedures like mean-shift,
which require at least second-order smoothness assumptions.
Interestingly, we only require H\"older continuity.
\end{remark}

\subsection{KDE Bounds}
We next give uniform bounds on KDE required to analyze Quick Shift.
\begin{definition}
Define kernel function $K : \mathbb{R}^d \rightarrow \mathbb{R}_{\ge 0}$ where $\mathbb{R}_{\ge 0}$ denotes the non-negative real numbers such that
$\int_{\mathbb{R}^d} K(u) du = 1$.
\end{definition}
We make the following mild regularity assumptions on $K$.
\begin{assumption} (Spherically symmetric, non-increasing, and exponential decays) \label{k_assumption1}
There exists non-increasing function $k : \mathbb{R}_{\ge 0} \rightarrow \mathbb{R}_{\ge 0}$ such that $K(u) = k(|u|)$ for $u \in \mathbb{R}^d$
and there exists $\rho, C_\rho, t_0 > 0$ such that for $t > t_0$,
$k(t) \le C_\rho \cdot \exp(-t^\rho)$.
\end{assumption}
\begin{remark}
These assumptions allow the popular kernels such as Gaussian, Exponential, Silverman, uniform, triangular, tricube, Cosine, and Epanechnikov.
\end{remark}

\begin{definition} [Kernel Density Estimator]
Given a kernel $K$ and bandwidth $h > 0$ the KDE is defined by
\vspace{-0.3cm}
\begin{align*}
\widehat{f}_{h}(x)= \frac{1}{n\cdot h^d} \sum_{i=1}^n K\left( \frac{x - X_i}{h}\right).
\end{align*}
\end{definition}
\vspace{-0.3cm}

Here we provide the uniform KDE bound which will be used for our analysis, established in \cite{jiang2017uniform}.
\begin{lemma} \label{supnorm} [$\ell_\infty$ bound for $\alpha$-H\"older continuous functions. Theorem 2 of \cite{jiang2017uniform}] There exists positive constant $C'$ depending on $f$ and $K$ such that the following holds with probability at least $1 - 1/n$ uniformly in $h > (\log n/n)^{1/d}$.
\vspace{-0.3cm}
\begin{align*}
\sup_{x \in \mathbb{R}^d} |\widehat{f}_{h}(x) - f(x)| < C' \cdot \left(h^\alpha + \sqrt{\frac{\log n}{n \cdot h^d  }}  \right).
\end{align*}
\end{lemma}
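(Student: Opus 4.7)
The plan is to follow the standard bias--variance route for KDE suprema, but with care to accommodate both the mild Hölder smoothness and the uniformity in $x$ and $h$. Write
\[
\widehat{f}_{h}(x) - f(x) \;=\; \bigl(\widehat{f}_{h}(x) - \expec\widehat{f}_{h}(x)\bigr) \;+\; \bigl(\expec\widehat{f}_{h}(x) - f(x)\bigr),
\]
so the task splits into a deterministic bias bound and a uniform stochastic fluctuation bound.

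For the bias, I would use a change of variables to write $\expec\widehat{f}_{h}(x) - f(x) = \int K(u)\bigl(f(x-hu) - f(x)\bigr)\,du$. Splitting this integral into a ball $\{|u|\le R\}$ where Hölder continuity gives $|f(x-hu)-f(x)|\le C_\alpha h^\alpha |u|^\alpha$, and a tail $\{|u|>R\}$ controlled by the exponential decay of $k$ from Assumption~\ref{k_assumption1} (together with $\|f\|_\infty<\infty$ from compact support and Hölder continuity), one picks $R$ as a modest constant (or $R=(\log n)^{1/\rho}$ to make the tail negligible) to obtain $|\expec\widehat{f}_{h}(x)-f(x)|\lesssim h^\alpha$ uniformly in $x$.

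For the variance term, the key is a uniform concentration of $\widehat{f}_h(x)$ around its mean over all $x\in\real^d$ and all $h$ in the allowed range. I would view $\widehat{f}_h(x)$ as an empirical process indexed by the function class $\mathcal{K}=\{K((\,\cdot - x)/h)/h^d : x\in\real^d,\,h>(\log n/n)^{1/d}\}$. Since $K$ is spherically symmetric and non-increasing (Assumption~\ref{k_assumption1}), the subgraphs of these functions form a VC class with constant VC-dimension (depending only on $d$), so Talagrand-type inequalities for VC classes (as in the bracketing/chaining results used by \cite{jiang2017uniform}) yield
\[
\sup_{x}\bigl|\widehat{f}_h(x) - \expec\widehat{f}_h(x)\bigr| \;\lesssim\; \sqrt{\frac{\|f\|_\infty\log n}{n\,h^d}}\;+\;\frac{\log n}{n\,h^d}
\]
with probability at least $1-1/(n\cdot \text{(\# bandwidth scales)})$, where the second term is dominated by the first under the lower bound $h>(\log n/n)^{1/d}$. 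A dyadic peeling over $h\in\{2^j(\log n/n)^{1/d}\}$ combined with a union bound then promotes this to uniform control in $h$ at the cost of only an extra $\log n$ factor absorbed into the constant $C'$.

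The main obstacle is the double uniformity in $x$ and in $h$: treating $x$ alone by a covering of $\X$ is easy since $\X$ is compact, but the uniformity in $h$ requires the VC structure of the kernel class (so that the covering number of $\mathcal{K}$ does not blow up with the bandwidth range) plus a careful peeling. Once that empirical-process ingredient is in place the rest is a routine bias--variance combination, and adding the two bounds gives exactly the stated rate $h^\alpha + \sqrt{\log n/(nh^d)}$.
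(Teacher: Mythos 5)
The paper does not actually prove this lemma: it is imported verbatim as Theorem 2 of \cite{jiang2017uniform}, so there is no internal proof to compare against. Your sketch correctly reconstructs the standard argument and is essentially the route taken in that cited reference: a bias--variance split, a H\"older bound on the bias, and uniform-in-$(x,h)$ concentration of $\widehat{f}_h-\expec\widehat{f}_h$ obtained from the VC property of the class of rescaled, spherically symmetric, non-increasing kernels (a Gin\'e--Guillou-type bound). One small correction to the bias step: truncating the integral at $R=(\log n)^{1/\rho}$ is both unnecessary and slightly harmful, since it would leave a spurious $(\log n)^{\alpha/\rho}$ factor in front of $h^\alpha$ that the stated bound does not carry. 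The exponential decay in Assumption~\ref{k_assumption1} already gives $\int_{\real^d} K(u)\,\|u\|^\alpha\,du<\infty$, so the H\"older estimate integrates directly to $|\expec\widehat{f}_h(x)-f(x)|\le C_\alpha h^\alpha\int K(u)\|u\|^\alpha\,du\lesssim h^\alpha$ with no truncation at all. (There is also a boundary caveat --- the supremum in the statement runs over all of $\real^d$ while H\"older continuity is only assumed on the compact support $\X$, which implicitly requires $f$ to vanish continuously at $\partial\X$ --- but that is an issue with the imported statement rather than with your argument.)
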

\vspace{-0.3cm}

\section{Mode Estimation}
In this section, we give guarantees about the local modes returned by Quick Shift. We make the additional assumption that the modes are local maxima points with negative-definite Hessian.
\begin{assumption}\label{modeassumption1} [Modes]
A local maxima of $f$ is a connected region $M$ such that the density is constant on $M$ and decays around its boundaries.
Assume that each local maxima of $f$ is a point, which we call a mode.
Let $\mathcal{M}$ be the modes of $f$ where $\mathcal{M}$ is a finite set.
Then let $f$ be twice differentiable around a neighborhood of each $x \in \mathcal{M}$ and let $f$ have a negative-definite Hessian
at each $x \in \mathcal{M}$ and those neighborhoods are disjoint. 
\end{assumption}
This assumption leads to the following.
\begin{lemma} [Lemma 5 of \cite{dasgupta2014optimal}] \label{modereg}
Let $f$ satisfy Assumption~\ref{modeassumption1}. There exists $r_M, \check{C}, \hat{C} > 0$ such that
the following holds for all $x_0 \in \mathcal{M}$ simultaneously.
\begin{align*}
\check{C} \cdot |x_0 - x|^2 \le f(x_0) - f(x) \le \hat{C} \cdot |x_0 - x|^2,
\end{align*}
for all $x \in A_{x_0}$ where $A_{x_0}$ is a connected component of $\{ x : f(x) \ge \inf_{x' \in B(x_0, r_M)} f(x) \}$ which contains $x_0$ and does not intersect with other modes.
\end{lemma}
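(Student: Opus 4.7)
The plan is to reduce the statement to a local Taylor expansion at each mode, then patch the local constants into a single global triple $(\check{C}, \hat{C}, r_M)$ using finiteness of $\mathcal{M}$.

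First I fix a mode $x_0 \in \mathcal{M}$. By Assumption~\ref{modeassumption1}, $f$ is $C^2$ in a neighborhood of $x_0$, $\nabla f(x_0)=0$ (since $x_0$ is a local maximum), and $H:=\nabla^2 f(x_0)$ is negative definite with eigenvalues lying in $[-M_{x_0},-\mu_{x_0}]$ for some $0<\mu_{x_0}\le M_{x_0}$. Taylor expanding,
\begin{align*}
f(x_0)-f(x) = -\tfrac{1}{2}(x-x_0)^{\top}H(x-x_0) + o(|x-x_0|^2),
\end{align*}
so there exists $r_{x_0}>0$ such that for all $x\in B(x_0,r_{x_0})$,
\begin{align*}
\tfrac{\mu_{x_0}}{4}\,|x-x_0|^2 \;\le\; f(x_0)-f(x)\;\le\; M_{x_0}\,|x-x_0|^2.
\end{align*}
Set $\check{C}_{x_0}:=\mu_{x_0}/4$ and $\hat{C}_{x_0}:=M_{x_0}$.

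Next I must argue that, for a sufficiently small radius, the connected component $A_{x_0}$ actually lies inside $B(x_0,r_{x_0})$, so that the quadratic bounds transfer from a ball of validity to the level-set component appearing in the statement. This is the main obstacle: a priori $A_{x_0}$ has no metric description. Choose $r_M \le r_{x_0}$ and set $\lambda := \inf_{x'\in B(x_0,r_M)}f(x')$. The upper bound gives $\lambda \ge f(x_0)-\hat{C}_{x_0}r_M^2$. Suppose, for contradiction, that $A_{x_0} \not\subseteq B(x_0,r_{x_0})$; then by connectedness of $A_{x_0}$ and the fact that $x_0\in A_{x_0}$, there is a path in $A_{x_0}$ crossing $\partial B(x_0,r_{x_0})$ at some point $y$. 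Since $y\in\overline{B(x_0,r_{x_0})}$, the lower quadratic bound yields $f(y)\le f(x_0)-\check{C}_{x_0}r_{x_0}^2$; but $y\in A_{x_0}$ forces $f(y)\ge \lambda \ge f(x_0)-\hat{C}_{x_0}r_M^2$. Imposing $\hat{C}_{x_0}r_M^2<\check{C}_{x_0}r_{x_0}^2$, i.e.\ $r_M < r_{x_0}\sqrt{\check{C}_{x_0}/\hat{C}_{x_0}}$, produces a contradiction and thus $A_{x_0}\subseteq B(x_0,r_{x_0})$. Consequently the two-sided quadratic inequality holds for every $x\in A_{x_0}$.

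Finally, the bounds must hold \emph{simultaneously} for all modes. Since $\mathcal{M}$ is finite, take $\check{C}:=\min_{x_0\in\mathcal{M}}\check{C}_{x_0}$, $\hat{C}:=\max_{x_0\in\mathcal{M}}\hat{C}_{x_0}$, and let $r_M$ be the minimum over $x_0\in\mathcal{M}$ of the bound derived above, further shrunk so that $\sqrt{(f(x_0)-\lambda)/\check{C}_{x_0}}$ is smaller than half the pairwise separation of $\mathcal{M}$ (possible because the modes form a finite discrete set, so that separation is strictly positive). This last shrinking guarantees $A_{x_0}$ does not reach any other mode. With these uniform constants, the conclusion follows for every $x_0\in\mathcal{M}$ and every $x\in A_{x_0}$.
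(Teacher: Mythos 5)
The paper does not actually prove this lemma: it is imported verbatim as Lemma~5 of \citet{dasgupta2014optimal}, so there is no in-paper argument to compare yours against. Your reconstruction is correct and follows the standard route for this kind of statement: a second-order Peano--Taylor expansion at each nondegenerate maximum (using $\nabla f(x_0)=0$ and the eigenvalue bounds on the Hessian) to get the two-sided quadratic control on a small ball, a level-based containment argument to show the component $A_{x_0}$ cannot escape that ball once $\hat{C}_{x_0}r_M^2<\check{C}_{x_0}r_{x_0}^2$, and finiteness of $\mathcal{M}$ to uniformize the constants. The containment step is the part that is easy to overlook, and you handle it correctly; the same mechanism (the radius of $A_{x_0}$ is at most $r_M\sqrt{\hat{C}_{x_0}/\check{C}_{x_0}}$) also justifies your final claim that $A_{x_0}$ avoids the other modes. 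Two cosmetic remarks: the phrase ``there is a path in $A_{x_0}$ crossing $\partial B(x_0,r_{x_0})$'' presumes path-connectedness, whereas plain connectedness already gives $A_{x_0}\cap\partial B(x_0,r_{x_0})\neq\emptyset$ (a connected set meeting both the open ball and the exterior must meet the sphere), which is all you use; and the quadratic bounds derived on the open ball extend to its closure by continuity, which is needed when you evaluate them at the boundary point $y$. Neither affects correctness.
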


The next assumption ensures that the level sets don't become arbitrarily thin as long as we are sufficiently away from the modes.
\begin{assumption}\label{modeassumption2} [Level Set Regularity]
For each $\sigma, r > 0$, there exists $\eta > 0$ such that the following holds for all connected components $A$ of $L_f(\lambda)$ with $\lambda > 0$ and $A \not\subseteq \cup_{x_0\in \mathcal{M}} B(x_0, r)$. If $x$ lies on the boundary of $A$, then $\text{Vol}(B(x, \sigma) \cap A) > \eta$ where $\text{Vol}$ is volume w.r.t. the uniform measure on $\mathbb{R}^d$.
\end{assumption}


We next give the result about mode recovery for Quick Shift. It says that as long as $\tau$ is small enough, then
as the number of samples grows, the roots of the trees returned by Quick Shift will bijectively correspond to the true modes of $f$
and the estimation errors will match lower bounds established by \citet{tsybakov1990recursive} up to logarithmic factors.
We defer the proof to Theorem~\ref{theo:modeestquickshift2} which is a generalization of the following result.
\begin{theorem}[Mode Estimation guarantees for Quick Shift]\label{theo:modeestquickshift}
Let $\tau < r_M/2$ and
 Assumptions~\ref{assumption:holder},~\ref{assumption:levelset_continuity},~\ref{k_assumption1},~\ref{modeassumption1}, and~\ref{modeassumption2} hold. 
 Choose $h$ such that $(\log n)^{2/\rho} \cdot h \rightarrow 0$ and $\log n / (n h^d) \rightarrow 0$ as $n \rightarrow \infty$.
Let $\widehat{\mathcal{M}}$ be the heads of the trees in $G$ (returned by Algorithm~\ref{alg:quickshift}).
There exists constant $C$ depending on $f$ and $K$ such that for $n$ sufficiently large, with probability at least $1 - 1/n$,
\begin{align*}
d_H(\mathcal{M}, \widehat{\mathcal{M}})^2 < C\left((\log n)^{4/\rho} h^{2} + \sqrt{\frac{\log n}{n \cdot h^d}}\right).
\end{align*}
and $|\mathcal{M}| = |\widehat{\mathcal{M}}|$.
In particular, taking $h \approx n^{-1/(4+d)}$ optimizes the above rate to $d(\mathcal{M}, \widehat{\mathcal{M}}) = \tilde{O}(n^{-1/(4+d)})$. This matches the minimax optimal rate for mode estimation up to logarithmic factors. 
\end{theorem}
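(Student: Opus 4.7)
The argument splits into three parts: a uniform KDE concentration step, existence of a tree head near each true mode, and absence of spurious tree heads far from $\mathcal{M}$. Throughout I condition on the high-probability event of Lemma~\ref{supnorm}, so $\sup_x |\widehat f_h(x)-f(x)| \le C'(h^\alpha + \sqrt{\log n/(nh^d)})$.

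\textbf{Step 1: Sharper KDE bias near the modes.} Using the exponential decay in Assumption~\ref{k_assumption1}, the kernel can be truncated at radius $O(h(\log n)^{1/\rho})$ with an error negligible compared to the stated rate. Because $f$ is twice differentiable in a fixed neighborhood of each $x_0 \in \mathcal{M}$ (Assumption~\ref{modeassumption1}) and $K$ is spherically symmetric so that $\int u\,K(u)\,du = 0$, a second-order Taylor expansion of $f$ inside the truncation radius improves the local bias from $h^\alpha$ to $(\log n)^{4/\rho}h^{2}$. This yields
\[
|\widehat f_h(x) - f(x)| \;\le\; \varepsilon_n \;:=\; C_1\!\left((\log n)^{4/\rho}\,h^{2} + \sqrt{\log n/(nh^d)}\right)
\]
for all $x$ in some fixed neighborhood $U$ of $\mathcal{M}$, which is exactly the bound inside the theorem statement.

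\textbf{Step 2: Each true mode yields a tree head.} Fix $x_0 \in \mathcal{M}$ and let $\hat x_0 \in \argmax\{\widehat f_h(x_i) : x_i \in X_{[n]} \cap A_{x_0}\}$, where $A_{x_0}$ is the basin from Lemma~\ref{modereg}. A standard sample-coverage bound (using the chosen $h$) places a sample arbitrarily close to $x_0$ with high probability, so $\widehat f_h(\hat x_0) \ge f(x_0) - \varepsilon_n$. Combining with $\widehat f_h(\hat x_0) \le f(\hat x_0) + \varepsilon_n$ and Lemma~\ref{modereg} gives
\[
\check C\,|\hat x_0 - x_0|^2 \;\le\; f(x_0)-f(\hat x_0) \;\le\; 2\varepsilon_n,
\]
so $|\hat x_0 - x_0|^2 \le 2\varepsilon_n/\check C$. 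Since $\tau < r_M/2$ and the mode neighborhoods of Assumption~\ref{modeassumption1} are disjoint, the ball $B(\hat x_0,\tau)$ intersects no other basin $A_{x_0'}$; any sample inside it with higher KDE value would have to lie in $A_{x_0}$ itself, contradicting the choice of $\hat x_0$. Hence $\hat x_0$ has no outgoing Quick Shift edge and is a tree head, at distance $O(\sqrt{\varepsilon_n})$ from $x_0$.

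\textbf{Step 3: Ruling out spurious tree heads.} Take any sample $x_i$ at distance at least $r := \sqrt{2\varepsilon_n/\check C}$ from every mode. Its connected level-set component is not contained in any $B(x_0,r)$, so by Assumption~\ref{modeassumption2} there is a fixed $\eta > 0$ lower bound on $\volume(B(x_i,\tau) \cap L_f(f(x_i)))$. Applying Assumption~\ref{assumption:levelset_continuity} with a suitable $\epsilon$ produces $\delta > 0$ with $L_f(f(x_i))^{\ominus \epsilon} \subseteq L_f(f(x_i)+\delta)$, so a constant-volume portion of $B(x_i,\tau)$ has density at least $f(x_i)+\delta$. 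Standard binomial concentration gives a sample $x_j$ in this region, and once $n$ is large enough that $2\varepsilon_n < \delta$ we get $\widehat f_h(x_j) > \widehat f_h(x_i)$. Thus $x_i$ cannot be a tree head. Together with Step~2 this forces a bijection between $\mathcal{M}$ and $\widehat{\mathcal{M}}$ and yields the Hausdorff bound $d_H(\mathcal{M},\widehat{\mathcal{M}})^2 \le 2\varepsilon_n/\check C$.

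\textbf{Main obstacle.} Step~3 is the delicate part: the parameters $\delta$ and $\eta$ coming from Assumptions~\ref{assumption:levelset_continuity} and~\ref{modeassumption2} depend on the separation radius $r$, yet $r$ itself shrinks as $\sqrt{\varepsilon_n}$ with $n$. Reconciling these requires either a covering argument at scale $\sqrt{\varepsilon_n}$ with a union bound, or applying the level-set assumptions uniformly over $\X \setminus \bigcup_{x_0 \in \mathcal{M}} B(x_0,r)$ and quantifying how $\delta$ and $\eta$ can be lower-bounded there. This is also where the $(\log n)^{4/\rho}$ factor in the final rate is pinned down, since $\varepsilon_n$ must simultaneously dominate the localized KDE error from Step~1 and the density gap $\delta$ required by Step~3.
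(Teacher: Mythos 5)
Your Steps 1 and 2 are essentially sound and parallel the paper's route: the truncated-kernel/second-order Taylor sharpening of the bias near the modes is exactly what Lemma~\ref{lemma:localmode} (via Theorem 3 of the cited KDE paper) packages, and your observation that $B(\hat x_0,\tau)\subseteq B(x_0,r_M)\subseteq A_{x_0}$ when $\tau<r_M/2$ is a clean way to see that the empirical maximizer over the basin has no outgoing edge (the paper instead takes the argmax over $B(x_0,\tau)$ and, in the generalization Theorem~\ref{theo:modeestquickshift2}, uses the $(\tau+\epsilon,\delta)^+$ margin to the same end). You should still say explicitly that the basin-wide argmax lands in the neighborhood where the sharpened bias bound applies, which follows because samples outside a fixed ball around $x_0$ have $f$ smaller than $f(x_0)$ by a fixed constant.

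The genuine gap is the one you flag in Step 3, and it does not go away with more careful bookkeeping: Assumptions~\ref{assumption:levelset_continuity} and~\ref{modeassumption2} are purely qualitative, so the $\delta$ and $\eta$ they supply for a separation radius $r$ carry no quantitative dependence on $r$; with $r=\sqrt{2\varepsilon_n/\check C}\rightarrow 0$ you cannot guarantee $\delta(r)>2\varepsilon_n$, and neither a covering argument nor a union bound helps because the failure is in the deterministic assumptions, not in the concentration. (Indeed, near a mode the available density gap inside a $\tau$-ball around a point at distance $r$ from the mode is itself of order $r^2\asymp\varepsilon_n$, i.e.\ the same order as the KDE error, so the direct comparison is hopeless at that scale.) The paper's resolution avoids the shrinking scale entirely: it shows only that every head $\hat x\in\widehat{\mathcal{M}}$ satisfies $B(\hat x,\tau_0)\cap\mathcal{M}\neq\emptyset$ for a \emph{fixed} radius $\tau_0=\min\{\epsilon/3,\tau/3,r_M/2\}$, so Assumptions~\ref{assumption:levelset_continuity} and~\ref{modeassumption2} are invoked with constant parameters and yield constant $\sigma,\eta$, and Lemma~\ref{supnorm} plus a sample-coverage bound produce the contradicting higher-KDE sample. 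The rate is then recovered by counting rather than by geometry: two heads within distance $\tau$ of each other must coincide (each lies in the other's $\tau$-ball and both are $\tau$-local maximizers of $\widehat f_h$), so each mode has at most one head within $\tau_0$; combined with your Step 2 this forces $|\widehat{\mathcal{M}}|=|\mathcal{M}|$, and the unique head within $\tau_0$ of each mode is necessarily the one Step 2 places at distance $\tilde r$. That is how the Hausdorff bound at rate $\tilde r^2$ is obtained without ever applying the regularity assumptions at scale $\tilde r$; you should restructure Step 3 along these lines.
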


We now give a stronger notion of mode that fits better for analyzing the role of $\tau$.
In the last result, it was assumed that the practitioner wished to recover exactly the modes of the density $f$ by taking $\tau$ sufficiently small. Now, we analyze the case
where $\tau$ is intentionally set to a particular value so that Quick Shift produces segmentations that group modes together that are in close proximity to higher density regions. 
\begin{definition}
A mode $x_0 \in \mathcal{M}$ is an $(r,\delta)^+$-mode if $f(x_0) > f(x) + \delta$ for all $x \in B(x_0, r) \backslash B(x_0, r_M)$. 
A mode $x_0 \in \mathcal{M}$ is an $(r, \delta)^-$-mode if $f(x_0) < f(x) - \delta$ for some $x \in B(x_0, r)$.
Let $\mathcal{M}_{r, \delta}^+ \subseteq \mathcal{M}$ and $\mathcal{M}_{r, \delta}^- \subseteq \mathcal{M}$ denote the set of $(r, \delta)^+$-modes and $(r, \delta)^-$-modes of $f$, respectively.
\end{definition}
In other words, an $(r,\delta)^+$-mode is a mode that is also a maximizer in a larger ball of radius $r$ by at least $\delta$ when outside of the region where there is quadratic decay and smoothness ($B(x_0, r_M)$). An $(r,\delta)^-$-mode is a mode that is not the maximizer in its radius $r$ ball by a margin of at least $\delta$.

The next result shows that Algorithm recovers the $(\tau + \epsilon, \delta)^+$-modes of $f$ and excludes the $(\tau - \epsilon, \delta)^-$-modes of $f$. The proof is in the appendix.
\begin{theorem} (Generalization of Theorem~\ref{theo:modeestquickshift}) \label{theo:modeestquickshift2}
Let $\delta, \epsilon > 0$ and 
suppose Assumptions~\ref{assumption:holder},~\ref{assumption:levelset_continuity},~\ref{k_assumption1},~\ref{modeassumption1}, and~\ref{modeassumption2} hold. 
Let $h \equiv h(n)$ be chosen such that $h \rightarrow 0$ and $\log n/(nh^d) \rightarrow 0$ as $n \rightarrow \infty$. 
Then there exists $C > 0$ depending on $f$ and $K$ such that the following holds for $n$ sufficiently large with probability at least $1-1/n$.
For each $x_0 \in \mathcal{M}_{\tau+\epsilon, \delta}^+ \backslash \mathcal{M}_{\tau-\epsilon, \delta}^-$,
there exists unique $\hat{x} \in \widehat{\mathcal{M}}$ such that
\begin{align*}
||x_0 - \hat{x} ||^2 < C\left((\log n)^{4/\rho} h^{2} + \sqrt{\frac{\log n}{n \cdot h^d}}\right).
\end{align*}
Moreover, $|\widehat{\mathcal{M}}| \le |\mathcal{M}| - |\mathcal{M}_{\tau-\epsilon, \delta}^-|$.\\
In particular, taking $\epsilon \rightarrow 0$ and $\delta \rightarrow 0$ gives us an exact characterization of the asymptotic behavior of Quick Shift in terms of mode recovery.
\end{theorem}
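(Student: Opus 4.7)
I work on the high-probability event of Lemma~\ref{supnorm}, writing $\lambda_n := C'(h^{\alpha} + \sqrt{\log n/(nh^d)})$, and combine this KDE control with the quadratic geometry of Lemma~\ref{modereg}. A standard covering argument (using that $f$ is bounded below in the Hessian neighborhood of each mode) additionally provides, on a further probability-$(1-1/n)$ event, a sample within distance $r_n := O((\log n/n)^{1/d})$ of every mode in $\mathcal{M}$. I take $n$ large enough that $\lambda_n, r_n \ll \min(\epsilon,\delta,r_M)$.

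\emph{Recovery of $x_0 \in \mathcal{M}_{\tau+\epsilon,\delta}^+ \setminus \mathcal{M}_{\tau-\epsilon,\delta}^-$.} Let $x_j^{\star}$ be the sample closest to $x_0$; by Lemma~\ref{modereg} and the uniform KDE bound, $\widehat{f}_h(x_j^{\star}) \ge f(x_0) - \hat{C}r_n^2 - \lambda_n$. I show by induction on the Quick Shift step that the whole trajectory from $x_j^{\star}$ stays in $B(x_0,r_M)$: if the current vertex $y$ lies in $B(x_0,r_M)$, any jump target $z$ satisfies $\|z-x_0\| \le \tau + \|y-x_0\| \le \tau+\epsilon$ and $\widehat{f}_h(z) > \widehat{f}_h(y)$, so $f(z) \ge f(x_0) - \hat{C}\|y-x_0\|^2 - 2\lambda_n$; the $(\tau+\epsilon,\delta)^+$-property then forbids $z \in B(x_0,\tau+\epsilon)\setminus B(x_0,r_M)$ as soon as $\lambda_n + \hat{C}\|y-x_0\|^2 < \delta$. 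The trajectory therefore terminates at some $\hat{x} \in \widehat{\mathcal{M}} \cap B(x_0,r_M)$, and combining trajectory monotonicity with the uniform KDE bound and the quadratic decay gives
\[
\check{C}\,\|x_0-\hat{x}\|^2 \le f(x_0) - f(\hat{x}) \le f(x_0) - \widehat{f}_h(x_j^{\star}) + 2\lambda_n \le \hat{C}r_n^2 + 3\lambda_n,
\]
which is of the stated order. The sharper $(\log n)^{4/\rho}h^{2}$ factor (replacing the naive $h^\alpha$ coming from Lemma~\ref{supnorm}) is obtained by truncating $K$ outside radius $\sim (\log n)^{1/\rho}h$ using Assumption~\ref{k_assumption1} and applying the quadratic upper bound of Lemma~\ref{modereg} inside that ball to control the KDE bias at points near a mode. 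Uniqueness of $\hat{x}$ across different modes follows from the disjoint-Hessian-neighborhood clause of Assumption~\ref{modeassumption1}.

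\emph{Exclusion of $x_0 \in \mathcal{M}_{\tau-\epsilon,\delta}^-$.} By definition there is $x^{\star} \in B(x_0,\tau-\epsilon)$ with $f(x^{\star}) > f(x_0) + \delta$. H\"older continuity together with Assumption~\ref{modeassumption2} produces a region of volume bounded below around $x^{\star}$ on which $f > f(x_0) + \delta/2$, and the covering argument places a sample $x_k$ in it. Any sample $y$ in a sufficiently small neighborhood of $x_0$ then satisfies $\widehat{f}_h(y) \le f(x_0) + \lambda_n < \widehat{f}_h(x_k)$ and $\|y-x_k\| < (\tau-\epsilon) + o(1) < \tau$, so $y$ acquires an outgoing edge and cannot be a tree root; iterating along the tree rooted at $y$ (confined by the same Hessian arguments as in the recovery step) shows that no head of $\widehat{\mathcal{M}}$ lies in a neighborhood of $x_0$. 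Finally, every head of $\widehat{\mathcal{M}}$ is an empirical $\tau$-local maximum, so by the uniform KDE bound it must sit near some true mode of $f$; combining this with the recovery and exclusion statements produces the counting bound $|\widehat{\mathcal{M}}| \le |\mathcal{M}| - |\mathcal{M}_{\tau-\epsilon,\delta}^-|$.

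\emph{Main obstacle.} The delicate step is the inductive confinement of the Quick Shift trajectory to $B(x_0,r_M)$: a single $\tau$-jump could in principle cross into a neighbouring region carrying slightly higher empirical density, and ruling this out is precisely what the $(\tau+\epsilon,\delta)^+$-condition, together with $\lambda_n \ll \delta$ and Assumption~\ref{modeassumption2}, are designed to handle. The second technical piece is the kernel-tail refinement producing the $(\log n)^{4/\rho}h^{2}$ term in place of the naive $h^\alpha$ bias.
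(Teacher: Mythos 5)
Your route is genuinely different from the paper's, and the difference matters: the paper never tracks Quick Shift trajectories at all. It uses the static characterization that a sample is a head of a tree if and only if it maximizes $\widehat{f}_h$ among samples in its own $\tau$-ball. It then defines $\hat{x} := \argmax_{x\in B(x_0,\tau)\cap X_{[n]}}\widehat{f}_h(x)$, localizes $\hat{x}$ near $x_0$ via a dedicated lemma (Lemma~\ref{lemma:localmode}, which packages the kernel-truncation refinement giving the $(\log n)^{4/\rho}h^2$ bias term that you only sketch), and verifies directly that $\hat{x}$ is a $\tau$-local empirical maximum by bounding $\widehat{f}_h$ on the annulus $B(x_0,\tau+\tilde{r})\setminus B(x_0,\tau)$ using the $(\tau+\epsilon,\delta)^+$ property. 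With this observation, the ``main obstacle'' you identify --- inductively confining a trajectory --- simply does not arise.

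Beyond the difference in route, your argument has concrete gaps. First, the confinement induction as written does not close: from the hypothesis $y\in B(x_0,r_M)$ you write $\|z-x_0\|\le\tau+\|y-x_0\|\le\tau+\epsilon$, which requires $\|y-x_0\|\le\epsilon$, and your exclusion step needs $\hat{C}\|y-x_0\|^2+2\lambda_n<\delta$; neither follows from $\|y-x_0\|\le r_M$, since $\delta$ and $\epsilon$ are arbitrary while $r_M$ is a fixed constant of $f$. To repair it you must first use monotonicity of $\widehat{f}_h$ along the trajectory together with the lower quadratic bound of Lemma~\ref{modereg} to force $\|y-x_0\|=O(\tilde{r})$ at every step, and only then invoke the $(\tau+\epsilon,\delta)^+$ property; you mention monotonicity only at the end, for the rate. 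Second, uniqueness: disjointness of the Hessian neighborhoods shows different modes produce different heads, but the theorem asserts a \emph{unique} head within the stated radius of $x_0$, and you never exclude two distinct heads both lying near the same $x_0$. The paper's fix is one line: two heads within distance $\tau$ of each other are each $\tau$-local maximizers of $\widehat{f}_h$ over samples, hence equal. Third, the counting bound needs both that every head lies within a small radius of some mode of $\mathcal{M}\setminus\mathcal{M}_{\tau-\epsilon,\delta}^-$ (you assert this; the paper proves it by using Assumptions~\ref{assumption:levelset_continuity} and~\ref{modeassumption2} to manufacture a sample of strictly higher empirical density inside the head's $\tau$-ball otherwise) and that the resulting head-to-mode assignment is injective, which you do not address.
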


\section{Assignment of Points to Modes}
In this section, we give guarantees on how the points are assigned to their respective modes.
We first give the following definition which formalizes how two points are separated by a wide and deep valley.

\begin{definition} $x_1, x_2 \in \mathcal{X}$ are $(r_s,\delta)$-separated if there exists a set $S$ such that every path from $x_1$ and $x_2$ intersects with $S$ and
\begin{align*}
\sup_{ x \in S + B(0, r_s)} f(x) < \inf_{x \in B(x_1, r_s) \cup B(x_2, r_s)} f(x) - \delta.
\end{align*}
\end{definition}

\begin{lemma}\label{lemma:clustering_assignments}
Suppose Assumptions~\ref{assumption:holder},~\ref{assumption:levelset_continuity},~\ref{k_assumption1},~\ref{modeassumption1}, and~\ref{modeassumption2} hold. 
Let $\tau < r_s/2$ and choose $h$ such that $(\log n)^{2/\rho} \cdot h \rightarrow 0$ and $\log n / (n h^d) \rightarrow 0$ as $n \rightarrow \infty$.
Let $G$ be the output of Algorithm~\ref{alg:quickshift}.
The following holds with probability at least $1 - 1/n$ for $n$ sufficiently large depending on $f$, $K$, $\delta$, and $\tau$ {\it uniformly} in
all $x_1, x_2 \in \mathcal{X}$. If $x_1$ and $x_2$ are $(r_s, \delta)$-separated, then there cannot exist a directed path from $x_1$ to $x_2$ in $G$. 
\end{lemma}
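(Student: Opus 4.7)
My plan is to derive a contradiction from the assumed existence of a directed path, using only the uniform KDE bound of Lemma~\ref{supnorm} together with the geometric implication of $\tau < r_s/2$. Suppose such a path exists, say $x_1 = y_0 \to y_1 \to \cdots \to y_m = x_2$. By the definition of Algorithm~\ref{alg:quickshift}, each edge satisfies $\|y_i - y_{i+1}\| \le \tau$ and $\widehat{f}_h(y_i) < \widehat{f}_h(y_{i+1})$, so the empirical density is strictly increasing along the path; in particular $\widehat{f}_h(y_j) \ge \widehat{f}_h(x_1)$ for every $j$.

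Next I would concatenate the segments $[y_i, y_{i+1}]$ into a continuous polygonal curve from $x_1$ to $x_2$. By the $(r_s,\delta)$-separation hypothesis this curve must meet the separator $S$, so some single edge $[y_j, y_{j+1}]$ contains a point $z \in S$. Since $z$ lies on this segment, both $\|y_j - z\|$ and $\|y_{j+1} - z\|$ are at most $\|y_j - y_{j+1}\| \le \tau < r_s/2$, placing both endpoints in $S + B(0, r_s)$. The separation inequality then gives
\begin{align*}
f(y_j) \;\le\; \sup_{x \in S + B(0, r_s)} f(x) \;<\; f(x_1) - \delta,
\end{align*}
where I used $x_1 \in B(x_1, r_s)$ when taking the infimum in the definition of $(r_s,\delta)$-separation.

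The final step is to contradict this lower bound on the density gap using KDE concentration. Set $\epsilon_n := C'\bigl(h^\alpha + \sqrt{\log n/(n h^d)}\bigr)$; by Lemma~\ref{supnorm}, on a single event of probability at least $1 - 1/n$ we have $\|\widehat{f}_h - f\|_\infty < \epsilon_n$ uniformly on $\mathbb{R}^d$, and the chosen bandwidth forces $\epsilon_n \to 0$. Combining monotonicity along the path with the KDE bound,
\begin{align*}
f(y_j) \;>\; \widehat{f}_h(y_j) - \epsilon_n \;\ge\; \widehat{f}_h(x_1) - \epsilon_n \;>\; f(x_1) - 2\epsilon_n,
\end{align*}
which together with $f(y_j) < f(x_1) - \delta$ forces $\delta < 2\epsilon_n$; this fails for all $n$ large enough depending on $\delta$, giving the desired contradiction.

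The step that requires most care is the geometric one: one must verify that the crossing of $S$ by the polygonal curve happens within a single edge of length at most $\tau$, so that the constraint $\tau < r_s/2$ pulls both endpoints into the low-density collar $S + B(0, r_s)$ rather than merely placing the crossing point itself there. Beyond that, the argument invokes only the uniform KDE event of Lemma~\ref{supnorm}, which is a single high-probability event independent of $x_1, x_2$; this delivers the uniformity over $x_1, x_2 \in \mathcal{X}$ stated in the lemma for free, and no additional union bound is needed.
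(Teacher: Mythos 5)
Your proposal is correct and follows essentially the same route as the paper's proof: locate a vertex of the path inside the collar $S + B(0,r_s)$ (which forces $f$ to drop below $f(x_1)-\delta$ there), then use the uniform KDE bound of Lemma~\ref{supnorm} to contradict the monotonicity of $\widehat{f}_h$ along directed paths in $G$. The only difference is that you spell out the geometric step the paper compresses into ``given our choice of $\tau$''---namely that the crossing of $S$ occurs within a single edge of length at most $\tau < r_s/2$, so an endpoint of that edge lies in the collar---which is a worthwhile clarification but not a different argument.
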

\begin{proof}
Suppose that $x_1$ and $x_2$ are $(r_s, \delta)$-separated (with respect to set $S$) and there exists a directed path from $x_1$ to $x_2$ in $G$. 
Given our choice of $\tau$, there exists some point $x \in G$ such that $x \in S + B(0, r_s)$ and $x$ is on the path from $x_1$ to $x_2$.
We have $f(x) < f(x_1) - \delta$.
Choose $n$ sufficiently large such that by Lemma~\ref{supnorm}, $\sup_{x \in \mathcal{X}} |\widehat{f}_h(x) - f(x)| < \delta/2$. Thus, we have
$\widehat{f}_h(x) < \widehat{f}_h(x_1)$, which means a directed path in $G$ starting from $x_1$ cannot contain $x$, a contradiction. The result follows.
\end{proof}

\begin{figure}[H]
\begin{center}
\includegraphics[width=0.6\textwidth]{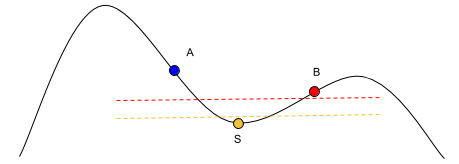}
\end{center}
\caption{Illustration of $(r_s,\delta)$-separation in $1$ dimension. Here $A$ and $B$ are $(r_s, \delta)$-separated by $S$. This is because the minimum density level of $r_s$-radius balls around $A$ and $B$ (the red dotted line) exceeds the maximum density level of the $r_s$-radius ball around $S$ by at least $\delta$ (golden dotted line). In other words, there exists a sufficiently wide (controlled by $r_s$ and $S$) and deep (controlled by $\delta$) valley separating $A$ and $B$. The results in this section will show that in such cases, these pairs of points will not be assigned to the same cluster.}
\end{figure}

This leads to the following consequence about how samples are assigned to their respective modes.
\begin{theorem} Assume the same conditions as Lemma~\ref{lemma:clustering_assignments}.
The following holds with probability at least $1 - 1/n$ for $n$ sufficiently large depending on $f$, $K$, $\delta$, and $\tau$ uniformly in $x \in \mathcal{X}$ and $x_0 \in \mathcal{M}$.
For each $x \in \mathcal{X}$ and $x_0 \in \mathcal{M}$, if $x$ and $x_0$ are $(r_s,\delta)$-separated, then $x$ will not be assigned to the tree corresponding to $x_0$ from Theorem~\ref{theo:modeestquickshift}.
\end{theorem}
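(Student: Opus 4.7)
A sample $x$ lies in the tree ``corresponding to $x_0$'' in the sense of Theorem~\ref{theo:modeestquickshift} precisely when there is a directed path in $G$ from $x$ to the unique estimate $\hat{x}_0 \in \widehat{\mathcal{M}}$ of $x_0$. My plan is therefore to rule out such a path by applying Lemma~\ref{lemma:clustering_assignments} to the pair $(x, \hat{x}_0)$, after transferring the given $(r_s, \delta)$-separation of $x$ from $x_0$ into a $(r_s - \eta, \delta)$-separation of $x$ from $\hat{x}_0$ for a suitably small slack $\eta > 0$.

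First I would invoke Theorem~\ref{theo:modeestquickshift} to produce, for $n$ large, a unique $\hat{x}_0 \in \widehat{\mathcal{M}}$ with $\|\hat{x}_0 - x_0\|$ as small as desired, simultaneously over the finite set $\mathcal{M}$. The hypothesis $\tau < r_s/2$ gives $r_s - 2\tau > 0$; fix any $\eta \in (0,\, r_s - 2\tau)$ and choose $n$ large enough that $\|\hat{x}_0 - x_0\| < \eta$ for every $x_0 \in \mathcal{M}$.

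Next I would check that the same separating set $S$ also witnesses $(r_s - \eta, \delta)$-separation of $x$ and $\hat{x}_0$. The original function-value inequality forces $d(x_0, S) > r_s$ (else $f(x_0) < f(x_0) - \delta$), so $B(x_0, \eta)$ is disjoint from $S$, and the short segment from $\hat{x}_0$ to $x_0$ inside this ball never meets $S$. Concatenating any path from $x$ to $\hat{x}_0$ with this segment yields a path from $x$ to $x_0$, which by assumption meets $S$; the intersection must therefore occur on the original portion from $x$ to $\hat{x}_0$. The quantitative inequality at radius $r_s - \eta$ follows directly from $B(\hat{x}_0, r_s - \eta) \subseteq B(x_0, r_s)$ and $S + B(0, r_s - \eta) \subseteq S + B(0, r_s)$, by monotonicity of $\inf$ and $\sup$.

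Finally, applying Lemma~\ref{lemma:clustering_assignments} with the radius $r_s - \eta > 2\tau$ in place of $r_s$ rules out any directed path from $x$ to $\hat{x}_0$ in $G$, which is equivalent to saying that $x$ is not assigned to the tree headed by $\hat{x}_0$. Uniformity in $x$ and $x_0$ is inherited from the uniform form of Lemma~\ref{lemma:clustering_assignments} together with the finiteness of $\mathcal{M}$, and both invocations rest on the same single high-probability KDE event of Lemma~\ref{supnorm}. The only point that really needs care is the perturbation bookkeeping above: the separation structure survives the small displacement of $\hat{x}_0$ from $x_0$ exactly because the radius can be shrunk by that displacement, and the slack $r_s - 2\tau > 0$ provided by the hypothesis is precisely what accommodates this shrinkage.
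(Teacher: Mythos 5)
Your proposal is correct and follows the same route the paper intends: the paper states this theorem as an immediate consequence of Lemma~\ref{lemma:clustering_assignments} (combining it with the mode-recovery guarantee of Theorem~\ref{theo:modeestquickshift}) and supplies no further argument. The one step you spell out that the paper leaves implicit---transferring the $(r_s,\delta)$-separation from $(x,x_0)$ to $(x,\hat{x}_0)$ by shrinking the radius by the displacement $\|\hat{x}_0-x_0\|$, using the slack $r_s-2\tau>0$---is exactly the right bookkeeping and is handled correctly.
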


\begin{remark}
In particular, taking $\delta \rightarrow 0$ and $r_s \rightarrow 0$ gives us guarantees for all points which have a unique mode in which it can be assigned to.
\end{remark}

We now give a more general version of $(r_s,\delta)$-separation, in which the condition holds if every path between the two points dips down at some point. The same results as the above extend for this definition in a straightforward manner.
\begin{definition}
$x_1, x_2 \in \mathcal{X}$ are $(r_s,\delta)$-weakly-separated if there exists a set $S$, with $x_1,x_2 \not\in S + B(0,r_s)$, such that every path $\mathcal{P}$ from $x_1$ and $x_2$ satisifes the following. (1) $\mathcal{P} \cap S \neq \emptyset$ and (2)
\begin{align*}
\sup_{ x \in \mathcal{P} \cap S + B(0, r_s)} f(x) < \inf_{x \in B(x_1', r_s) \cup B(x_2', r_s)} f(x) - \delta,
\end{align*}
where $x_1', x_2'$ are defined as follows.
Let $\mathcal{P}_1$ be the path obtained by starting at $x_1$ and following $\mathcal{P}$ until it intersects $S$,
and $\mathcal{P}_2$ be the path obtained by following $\mathcal{P}$ starting from the last time it intersects $S$ until the end.
Then $x_1'$ and $x_2'$ are points which respectively attain the highest values of $f$ on $\mathcal{P}_1$ and $\mathcal{P}_2$.
\end{definition}

\section{Cluster Tree Recovery}
The connected components of the level sets as the density level varies forms a hierarchical structure known as the cluster tree.
\begin{definition}[Cluster Tree]
The cluster tree of $f$ is given by
\begin{align*}
C_f (\lambda) := \text{connected components of } \{ x \in X : f(x) \ge \lambda\}.
\end{align*}
\end{definition}

\begin{definition}
Let $G(\lambda)$ be the subgraph of $G$ with vertices $x \in X_{[n]}$ such that $\widehat{f}_h(x) > \lambda$ and edges between pairs of vertices which have corresponding edges in $G$. Let $\tilde{G}(\lambda)$ be the sets of vertices corresponding to the connected components of $G(\lambda)$.
\end{definition}

\begin{definition}
Suppose that $\mathcal{A}$ is a collection of sets of points in $\mathbb{R}^d$. Then define $\text{Link}(\mathcal{A}, \delta)$ to be the result of repeatedly
removing pairs $A_1, A_2 \in \mathcal{A}$ from $\mathcal{A}$ ($A_1 \neq A_2$) that satisfy $\inf_{a_1 \in A_1} \inf_{a_2 \in A_2} ||a_1 -  a_2|| < \delta$ and adding $A_1 \cup A_2$ to $\mathcal{A}$ until no such pairs exist.
\end{definition}

\begin{algorithm}[tbh]
   \caption{Quick Shift Cluster Tree Estimator}
   \label{alg:quickshiftextract}
\begin{algorithmic}
	\STATE Input: Samples $X_{[n]} := \{X_1,...,X_n\}$, KDE bandwidth $h$, segmentation parameter $\tau > 0$.
	\STATE Let $G$ be the output of Quick Shift (Algorithm~\ref{alg:quickshift}) with above parameters.
	\STATE For $\lambda > 0$, let $\widehat{C}_f(\lambda) := \text{Link}(\tilde{G}(\lambda), \tau)$.
	\RETURN $\widehat{C}_f$
\end{algorithmic}
\end{algorithm}

{\bf Parameter settings for Algorithm~\ref{alg:quickshiftextract}:} Suppose that $\tau \equiv \tau(n)$ is chosen as a function of $n$ such such that $\tau \rightarrow 0$  as $n\rightarrow \infty$, $\tau(n) \ge  (\log^2 n/n)^{1/d}$ and $h \equiv h(n)$ is chosen such that $h \rightarrow 0$ and $\log n/(nh^d) \rightarrow 0$ as $n \rightarrow \infty$.

The following is the main result of this section, the proof is in the appendix.
\begin{theorem} [Consistency]\label{theo:clustertree}
Algorithm~\ref{alg:quickshiftextract} converges in probability to the true cluster tree of $f$ under merge distortion (defined in \cite{eldridge2015beyond}). 
\end{theorem}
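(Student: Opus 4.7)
The plan is to derive convergence under merge distortion from two complementary uniform statements. The merge distortion metric of Eldridge et al.\ compares cluster trees by uniformly comparing merge heights, i.e.\ for every pair of points the largest level at which they first belong to a common component. Thus it suffices to prove, with probability tending to one, that for every $\epsilon > 0$ and every level $\lambda$: (i) if $x,y$ lie in a single connected component of $L_f(\lambda+\epsilon)$ then they end up in the same component of $\widehat{C}_f(\lambda)$ (connectedness), and (ii) if $x,y$ lie in distinct components of $L_f(\lambda-\epsilon)$ then they end up in distinct components of $\widehat{C}_f(\lambda)$ (separation). Throughout, I would invoke Lemma~\ref{supnorm} to get $\sup_x|\widehat{f}_h(x)-f(x)|\le \epsilon_n\to 0$, which translates the empirical super-level sets defining $G(\lambda)$ into true super-level sets at a slightly shifted level.

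For the connectedness step, fix a connected component $A$ of $L_f(\lambda+\epsilon)$. The choice $\tau(n)\ge(\log^2 n/n)^{1/d}$ gives, by a standard covering plus uniform ball-probability argument, that every $(\tau/2)$-ball centered in $A$ contains a sample with high probability. A continuous path inside $A$ between two samples $x_i,x_j$ can then be covered by finitely many such overlapping balls, yielding a chain of samples with consecutive members within $\tau$ and all having empirical density exceeding $\lambda$. Each consecutive pair is either connected by a Quick Shift edge inside $G(\lambda)$ or lies in two distinct components of $G(\lambda)$ whose minimum distance is strictly less than $\tau$; in either case the $\text{Link}(\cdot,\tau)$ operation merges the components, so $x_i$ and $x_j$ end up in a common component of $\widehat{C}_f(\lambda)$.

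For the separation step, assume $A$ and $B$ are distinct components of $L_f(\lambda-\epsilon)$ yet some sample from $A$ and some sample from $B$ end up in a common component of $\widehat{C}_f(\lambda)$. Unfolding $\text{Link}$ produces a chain of samples in $G(\lambda)$ with each consecutive pair within $\tau$ (either by a Quick Shift edge or by a linking jump between components of $G(\lambda)$). The union of closed $\tau$-balls around these samples is then connected in $\mathbb{R}^d$, meets both $A$ and $B$, and by H\"older continuity (Assumption~\ref{assumption:holder}) satisfies $f>\lambda-\epsilon_n-C_\alpha\tau^\alpha$ everywhere on it. Assumption~\ref{assumption:levelset_continuity} then implies that, once $n$ is large enough that $\epsilon_n+C_\alpha\tau^\alpha<\epsilon/2$, any such connected set at that level is contained in a single component of $L_f(\lambda-\epsilon)$, contradicting the separation of $A$ and $B$. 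The main obstacle is calibrating parameters in this last step: separation requires $\tau$ and $\epsilon_n$ to shrink fast enough that the $\tau$-inflated chain cannot bridge genuine valleys, while connectedness requires $\tau$ to shrink slowly enough ($\tau\ge(\log^2 n/n)^{1/d}$) that samples densely fill each cluster; the stated schedule for $\tau$ and $h$ is exactly what supports both, and a uniform union bound over levels (using compactness of $\mathcal{X}$ and the uniform modulus in Assumption~\ref{assumption:levelset_continuity}) upgrades (i) and (ii) to convergence in merge distortion.
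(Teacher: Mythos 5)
Your proposal is correct and follows essentially the same route as the paper: a minimality/connectedness lemma (every component of a level set lands in a single component of $\widehat{C}_f$ at a slightly lower level, via sample-dense $\tau$-balls and the $\mathrm{Link}$ step) plus a separation lemma, combined through the Eldridge et al.\ characterization of merge-distortion convergence. The only real difference is in the separation step, where you inflate the linking chain by $\tau$ and apply H\"older continuity directly to place it inside the lower level set, while the paper constructs a buffer annulus around each component on which the density, and hence the KDE, stays below the merge level; both arguments are valid and rest on the same fact that a $\tau$-step chain of high-empirical-density samples cannot cross a low-density region once $\tau$ and the KDE error are small enough.
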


\begin{remark}
By combining the result of this section with the mode estimation result, we can obtain the following interpretation.
For any level $\lambda$, 
a component in $G(\lambda)$  estimates a connected component of the $\lambda$-level set of $f$, and that further, the trees within that component in $G(\lambda)$ have a one-to-one correspondence with 
the modes in the connected component.
\end{remark}

\begin{figure}[H]
\begin{center}
\includegraphics[width=0.4\textwidth]{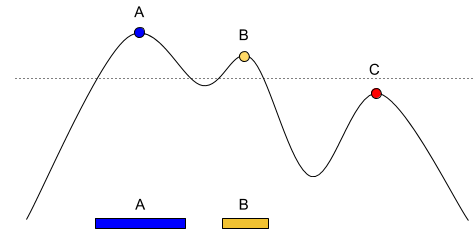}
\includegraphics[width=0.4\textwidth]{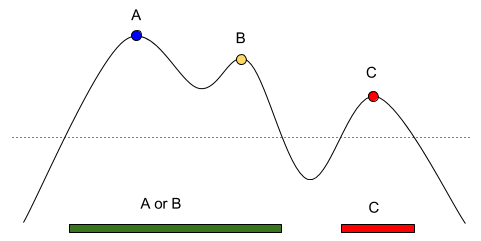}
\end{center}
\caption{Illustration on $1$-dimensional density with three modes $A$, $B$, and $C$. When restricting Quick Shift's output to samples have empirical density above a certain threshold and connecting nearby clusters, then this approximates the connected components of the true density level set. Moreover, we give guarantees that such points will be assigned to clusters which correspond to modes within its connected component. }
\end{figure}

\section{Modal Regression}

Suppose that we have joint density $f(X, y)$ on $\mathbb{R}^d \times \mathbb{R}$ w.r.t. to the Lebesgue measure.
In modal regression, we are interested in estimating the modes of the conditional $f(y|X=x)$ given samples from the joint distribution.

\begin{algorithm}[tbh]
   \caption{Quick Shift Modal Regression}
   \label{alg:modalregression}
\begin{algorithmic}
	\STATE Input: Samples $\mathcal{D} := \{(x_1, y_1),...,(x_n, y_n)\}$, bandwidth $h$, $\tau > 0$, and $x \in \mathcal{X}$.
	\STATE Let $Y = \{y_1,...,y_n\}$ and $\widehat{f}_h$ be the KDE computed w.r.t. $\mathcal{D}$.
	\STATE Initialize directed graph $G$ with vertices $Y$ and no edges.
	\FOR{$i = 1$ to $n$}
	\IF{there exists $y_j \in [y_i- \tau, y_i+\tau] \cap Y$ such that $\widehat{f}_h(x, y_j) > \widehat{f}_h(x, y_i)$}
	\STATE Add to $G$ an directed edge from $y_i$ to $\text{argmin}_{y_i \in Y : \widehat{f}_h(x, y_j) > \widehat{f}_h(x, y_i)} ||y_i - y_j||$.
	\ENDIF
	\ENDFOR
	\RETURN The roots of the trees of $G$ as the estimates of the modes of $f(y|X=x)$.
\end{algorithmic}
\end{algorithm}

\begin{theorem} [Consistency of Quick Shift Modal Regression]\label{theo:modalconst}
Suppose that $\tau \equiv \tau(n)$ is chosen as a function of $n$ such such that $\tau \rightarrow 0$  as $n\rightarrow \infty$, $\tau(n) \ge  (\log^2 n/n)^{1/d}$ and $h \equiv h(n)$ is chosen such that $h \rightarrow 0$ and $\log n/(nh^{d+1}) \rightarrow 0$ as $n \rightarrow \infty$. 
Let $\mathcal{M}_x$ be the modes of the conditional density $f(y|X=x)$ and $\widehat{\mathcal{M}}_x$ be the output of Algorithm~\ref{alg:modalregression}.
Then with probability at least $1 - 1/n$ uniformly in $x$ such that $f(y|X=x)$ and $K$ satisfies Assumptions~\ref{assumption:holder},~\ref{assumption:levelset_continuity},~\ref{k_assumption1},~\ref{modeassumption1}, and~\ref{modeassumption2}, 
\begin{align*}
d_H(\mathcal{M}_x, \widehat{\mathcal{M}}_x) \rightarrow 0 \hspace{0.2cm} \text{ as } n\rightarrow \infty.
\end{align*}
\end{theorem}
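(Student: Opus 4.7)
Algorithm~\ref{alg:modalregression} is structurally identical to Algorithm~\ref{alg:quickshift}, except that it operates on the one-dimensional samples $Y = \{y_1, \ldots, y_n\}$ and ranks them by $y \mapsto \widehat{f}_h(x, y)$ for the fixed input $x$. For any $x$ with $f_X(x) > 0$, the joint density factors as $f(x, y) = f_X(x) \cdot f(y \mid X = x)$, so the modes of $g_x(y) := f(x, y)$ (viewed as a function of $y$) coincide exactly with $\mathcal{M}_x$. The problem therefore reduces to showing that Quick Shift, run on the $y_i$ using $\widehat{g}_x(y) := \widehat{f}_h(x, y)$ to estimate $g_x$, recovers $\mathcal{M}_x$ in Hausdorff distance.

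The next step is to invoke the $(d{+}1)$-dimensional form of Lemma~\ref{supnorm} applied to the joint density $f$ on $\mathbb{R}^{d+1}$. Under the parameter conditions $h \to 0$ and $\log n / (n h^{d+1}) \to 0$, this yields, with probability at least $1 - 1/n$,
\begin{align*}
\sup_{(x, y) \in \mathbb{R}^{d+1}} \bigl|\widehat{f}_h(x, y) - f(x, y)\bigr| \longrightarrow 0.
\end{align*}
Restricted to any fixed $x$, this is exactly the uniform KDE bound needed to mimic the proof of Theorem~\ref{theo:modeestquickshift} for the one-dimensional estimation problem $g_x \mapsto \widehat{g}_x$.

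With that in hand, I would replay the mode-estimation argument from Theorem~\ref{theo:modeestquickshift2} (taking $\epsilon, \delta \to 0$) in this setting. Three facts need to be re-established: first, every true mode $y_0 \in \mathcal{M}_x$ has a sample $y_i$ within distance $\tau$, which follows from $\tau(n) \ge (\log^2 n/n)^{1/d}$ together with a standard covering argument, since $f_Y$ inherits positivity from $f(x, \cdot)$ on a neighborhood of $y_0$; second, the $\widehat{g}_x$-maximizer among samples in a small ball around a true mode converges to that mode, by the quadratic decay of Lemma~\ref{modereg} combined with the uniform bound above; third, any sample far from $\mathcal{M}_x$ admits a strictly $\widehat{g}_x$-larger sample within distance $\tau$, so it cannot be a root of its tree, which uses Assumption~\ref{modeassumption2} together with the KDE bound. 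Chaining these three facts as in the proof of Theorem~\ref{theo:modeestquickshift} gives a bijection between the trees of $G$ and $\mathcal{M}_x$, and hence $d_H(\mathcal{M}_x, \widehat{\mathcal{M}}_x) \to 0$.

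The main obstacle is the uniformity in $x$. The $(d{+}1)$-dimensional KDE bound is already uniform in $(x, y)$, so the probabilistic uniformity is automatic; what must be checked is that the hidden constants in the three facts above (in particular the quadratic-decay constants $\hat{C}, \check{C}$ of Lemma~\ref{modereg} and the regularity modulus $\eta$ of Assumption~\ref{modeassumption2}) do not degenerate as $x$ ranges over the admissible inputs. Because the conclusion is only the asymptotic statement $d_H \to 0$ rather than a finite-sample rate, it suffices to run the mode-recovery argument pointwise in $x$ with constants allowed to depend on $x$, and to invoke the uniform $(x, y)$ KDE bound to supply the single high-probability event on which everything holds simultaneously.
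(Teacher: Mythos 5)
Your plan is correct and follows essentially the same route as the paper: the paper's proof likewise applies the $(d{+}1)$-dimensional uniform KDE bound of Lemma~\ref{supnorm} to the joint density, rules out spurious roots far from $\mathcal{M}_x$ via Assumptions~\ref{assumption:levelset_continuity} and~\ref{modeassumption2} plus a sample-existence argument, and recovers each true mode by localizing the KDE maximizer in a $\tau$-window around it (using Lemma~\ref{modereg} for uniqueness), with $\tau \rightarrow 0$ giving Hausdorff consistency. Your explicit remarks on the factorization $f(x,y)=f_X(x)f(y\mid X=x)$ and on the $x$-uniformity of the constants are sensible additions but do not change the argument.
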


\section{Related Works}

{\bf Mode Estimation.} Perhaps the most popular procedure to estimate the modes is mean-shift; however, it has proven quite difficult to analyze.
\citet{arias2015estimation} made much progress by utilizing dynamical systems theory to 
show that mean-shift's updates converge to the correct gradient ascent steps. The recent work of \citet{dasgupta2014optimal} was the first to give a procedure which recovers the modes of a density
with minimax optimal statistical guarantees in a multimodal density.
They do this by using a top-down traversal of the density levels of a proximity graph, borrowing from work in cluster tree estimation. The procedure was shown to recover exactly the modes
of the density at minimax optimal rates.

In this work, we showed that Quick Shift attains the same guarantees while being a simpler approach than known procedures
that attain these guarantees \cite{dasgupta2014optimal, jiang2017modal}. Moreover unlike these procedures, Quick Shift also assigns the remaining samples to their appropriate modes.
Furthermore, Quick Shift also has a segmentation tuning parameter $\tau$ which allows us to merge the clusters of modes that are not maximal in its $\tau$-radius neighborhood into the clusters of other modes. This is useful as in practice, one may not wish to pick up
every single local maxima, especially when there are local maxima that can be grouped together by proximity.
We formalized the segmentation of such modes and identify which modes get returned and which ones become merged into other modes' clusters by Quick Shift.


{\bf Cluster Tree Estimation.} Work on cluster tree estimation has a long history.
Some early work on density based clustering by \citet{hartigan1981consistency} modeled the clusters 
of a density as the regions $\{x : f(x) \ge \lambda\}$ for some $\lambda$. This is called the {\it density level-set} of $f$ at level $\lambda$.
The cluster tree of $f$
is the hierarchy formed by the infinite collection of these clusters over all $\lambda$.
\citet{chaudhuri2010rates} introduced Robust Single Linkage (RSL) which was the first cluster tree estimation procedure with precise statistical guarantees.
Shortly after, \citet{kpotufe2011pruning} provided an estimator that ensured \textit{false clusters} were removed using
used an extra \textit{pruning} step.
Interestingly, Quick Shift does not require such a pruning step, since the points near cluster boundaries naturally get assigned to regions with higher density and thus no spurious clusters are formed near these boundaries.
\citet{sriperumbudur2012consistency,jiang2017density, wang2017optimal} showed that the popular DBSCAN algorithm \cite{ester1996density} also estimates these level sets.
\citet{eldridge2015beyond} introduced the {\it merge distortion metric} for cluster tree estimates, which provided a stronger notion of
consistency. We use their framework to analyze Quick Shift and show that this simple estimator is consistent in merge distortion.

\begin{figure}[H]
\includegraphics[width=0.24\textwidth]{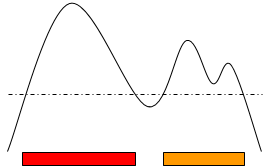}
\includegraphics[width=0.24\textwidth]{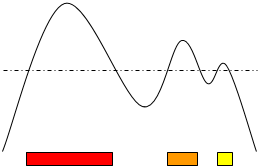}
\includegraphics[width=0.24\textwidth]{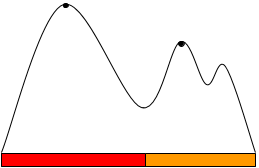}
\includegraphics[width=0.24\textwidth]{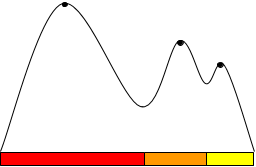}
\caption{Density-based clusters discovered by level-set model $\{x : f(x) \ge \lambda\}$ (e.g. DBSCAN) vs Quick Shift on a one dimensional density. {\bf Left} two images: level sets for two 
density level settings. Unassigned regions are noise and have no cluster assignment. {\bf Right} two images: Quick Shift with two different $\tau$ settings. The latter is a hill-climbing based clustering assignment. }
\end{figure}

{\bf Modal Regression.} Nonparametric modal regression \cite{chen2016nonparametric} is an alternative to classical regression, where we
are interested in estimating the modes of the conditional density $f(y|X=x)$ rather than the mean. 
Current approaches primarily use a modification of mean-shift; however analysis for mean-shift require higher order
smoothness assumptions. Using Quick Shift instead for modal regression requires less regularity assumptions while having consistency guarantees.

\section{Conclusion}
We provided consistency guarantees for Quick Shift under mild assumptions.
We showed that Quick Shift recovers the modes of a density from a finite sample with minimax optimal guarantees. 
The approach of this method is considerably different from known procedures that attain similar guarantees.
Moreover, Quick Shift allows tuning of the segmentation and we provided an analysis of this behavior.
We also showed that Quick Shift can be used as an alternative for estimating the cluster tree which contrasts with current approaches which
utilize proximity graph sweeps. We then constructed
a procedure for modal regression using Quick Shift which attains strong statistical guarantees. 


\section*{Appendix}

\subsection*{Mode Estimation Proofs}
\begin{lemma}\label{lemma:localmode}
Suppose Assumptions~\ref{assumption:holder},~\ref{assumption:levelset_continuity},~\ref{k_assumption1},~\ref{modeassumption1}, and~\ref{modeassumption2} hold. Let $\bar{r} > 0$ and
 $h \equiv h(n)$ be chosen such that $h \rightarrow 0$ and $\log n/(nh^d) \rightarrow 0$ as $n \rightarrow \infty$. 
Then the following holds for $n$ sufficiently large with probability at least $1-1/n$.
Define
\begin{align*}
\tilde{r}^2 := \max \left\{\frac{32\hat{C}}{\check{C}} (\log n)^{4/\rho} h^2, 17\cdot C' \sqrt{\frac{\log n}{n\cdot h^d}} \right\}.
\end{align*}
Suppose $x_0 \in \mathcal{M}$ and $x_0$ is the unique maximizer of $f$ on $B(x_0, \bar{r})$.
Then letting $\hat{x} := \argmax_{x \in B(x_0, \bar{r}) \cap X_{[n]}} \widehat{f}_h(x)$, we have
\begin{align*}
||x_0 - \hat{x}|| < \tilde{r}.
\end{align*}
\end{lemma}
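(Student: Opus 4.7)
The plan is to use the empirical-maximum condition directly. Condition on the high-probability event of Lemma~\ref{supnorm} and write $E := C'(h^\alpha + \sqrt{\log n/(nh^d)})$ for the uniform KDE error. I bound $\|x_0 - \hat{x}\|$ by comparing $\widehat{f}_h(\hat{x})$ with $\widehat{f}_h(x_s)$ at a well-chosen sample $x_s$ close to $x_0$, then convert to $f$-values via Lemma~\ref{modereg}.

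Step 1 (produce a nearby sample). Set $r_0 := \sqrt{\check{C}/(2\hat{C})}\,\tilde{r}$ so that $\hat{C} r_0^2 = \check{C}\tilde{r}^2/2$. A Chernoff tail on the Binomial count gives $\P(B(x_0, r_0) \cap X_{[n]} = \emptyset) \le \exp(-\Omega(n r_0^d f(x_0)))$, which is $\le 1/n^2$ for $n$ large because the hypotheses on $h$ together with the form of $\tilde{r}^2$ force $n r_0^d \gg \log n$. Let $x_s$ be any sample in $B(x_0, r_0)$.

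Step 2 (invert the empirical-maximum condition). Since $\hat{x}$ maximizes $\widehat{f}_h$ over $B(x_0, \bar{r}) \cap X_{[n]}$ and $x_s \in B(x_0, \bar{r})$, $\widehat{f}_h(\hat{x}) \ge \widehat{f}_h(x_s)$. Applying Lemma~\ref{supnorm} at both points and the quadratic upper bound of Lemma~\ref{modereg} at $x_s$ (legitimate since $r_0 < r_M$ for $n$ large) yields $f(\hat{x}) \ge f(x_0) - \hat{C} r_0^2 - 2E$. For $n$ large, $\tilde{r} < r_M$, so I split into cases: if $\hat{x} \in A_{x_0}$, the quadratic lower bound in Lemma~\ref{modereg} gives $\check{C}\|\hat{x} - x_0\|^2 \le f(x_0) - f(\hat{x}) \le \hat{C} r_0^2 + 2E$; if $\hat{x} \notin A_{x_0}$, then $f(\hat{x}) < \inf_{B(x_0, r_M)} f$, which sits strictly below $f(x_0)$ by a fixed positive margin, so $\check{C}\tilde{r}^2$ is already dwarfed by that margin and the same inequality holds. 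In either case $\|\hat{x} - x_0\|^2 \le \tilde{r}^2/2 + 2E/\check{C}$, and it remains to check $2E/\check{C} \le \tilde{r}^2/2$ — the precise coefficients $32$ and $17$ in the definition of $\tilde{r}^2$ are calibrated for exactly this.

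The main obstacle is the bias term inside $E$. The variance piece $C'\sqrt{\log n/(nh^d)}$ is comfortably absorbed by the second piece $17 C'\sqrt{\log n/(nh^d)}$ of $\tilde{r}^2$. The bias piece $h^\alpha$ coming from Lemma~\ref{supnorm} is however too crude to fit inside the first piece $(32\hat{C}/\check{C})(\log n)^{4/\rho} h^2$ of $\tilde{r}^2$ when $\alpha < 2$. To close this gap I would sharpen the bias locally near $x_0$ by a kernel-truncation argument: samples at distance exceeding $h(\log n)^{1/\rho}$ from the query contribute $O(1/n)$ in total by the exponential tail of Assumption~\ref{k_assumption1}, while on the remaining window the quadratic expansion of $f$ at $x_0$ from Lemma~\ref{modereg} yields $|\E[\widehat{f}_h(x)] - f(x)| \lesssim (\log n)^{2/\rho} h^2$ for $x$ near $x_0$. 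Substituting this refined local bias into $E$ makes the inequality $2E/\check{C} \le \tilde{r}^2/2$ hold, producing the asserted bound $\|x_0 - \hat{x}\| < \tilde{r}$.
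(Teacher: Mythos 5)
Your overall strategy is the same as the paper's: the paper's proof is a sketch that defers to Theorem 3 of \cite{jiang2017uniform}, whose argument is exactly the comparison you set up (a sample guaranteed near $x_0$, the quadratic bounds of Lemma~\ref{modereg}, and a kernel-truncation refinement of the KDE bias near the mode to replace the crude $h^\alpha$ term by $h^2\,\mathrm{polylog}(n)$). You correctly identify that last point as the crux, which is the main content of the cited proof. However, as written your argument has a genuine gap at the step ``applying Lemma~\ref{supnorm} at both points.'' The refined local bias bound $|\E[\widehat{f}_h(x)] - f(x)| \lesssim (\log n)^{2/\rho}h^2$ is only valid \emph{for $x$ near $x_0$}, where the quadratic expansion controls the oscillation of $f$ over the truncation window. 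You may apply it at $x_s$, but applying it at $\hat{x}$ is circular: the proximity of $\hat{x}$ to $x_0$ is precisely the conclusion of the lemma. For $\hat{x}$ in the annulus $B(x_0,\bar{r})\setminus B(x_0,\tilde{r})$ the two-sided bias relative to $f(\hat{x})$ can be as large as $\sup_{B(\hat{x},w)}f - f(\hat{x}) \asymp (\hat{C}-\check{C})\|\hat{x}-x_0\|^2$, which is of the same order as the quantity $\check{C}\|\hat{x}-x_0\|^2$ you are trying to bound, so the inequality $2E/\check{C}\le\tilde{r}^2/2$ cannot be forced by calibrating constants. The fix is the one the paper's sketch uses: do not pass through $f(\hat{x})$ pointwise, but bound $\sup_{x\in B(x_0,\bar{r})\setminus B(x_0,\tilde{r})}\widehat{f}_h(x)$ one-sidedly by $\sup$ of $f$ over a $w$-enlargement of the annulus (truncation plus monotonicity of the kernel), then compare superlevel sets of $f$ via Lemma~\ref{modereg}; this is the displayed inequality in the paper's proof sketch.

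A second, smaller gap is your case split on $\hat{x}\in A_{x_0}$ versus $\hat{x}\notin A_{x_0}$. The claim that $\hat{x}\notin A_{x_0}$ implies $f(\hat{x})<\inf_{B(x_0,r_M)}f$ is false in general: $\hat{x}$ could lie in a \emph{different} connected component of the superlevel set $\{f\ge\inf_{B(x_0,r_M)}f\}$ that intersects $B(x_0,\bar{r})$, where $f$ is large but $\hat{x}$ is far from $x_0$. What rescues this case is the hypothesis that $x_0$ is the unique maximizer of $f$ on $B(x_0,\bar{r})$: by compactness, $\sup_{B(x_0,\bar{r})\setminus B(x_0,r_M)}f\le f(x_0)-m$ for a fixed $m>0$ independent of $n$, and the lower bound $f$-level you derive at $\hat{x}$ exceeds $f(x_0)-m$ for $n$ large, forcing $\hat{x}\in B(x_0,r_M)$ where the quadratic lower bound applies. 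Your Step 1 is fine (note that the bound $nr_0^d\gg\log n$ should be justified via the branch $\tilde{r}^2\ge (32\hat{C}/\check{C})(\log n)^{4/\rho}h^2$ together with $nh^d\gg\log n$, since the other branch alone does not suffice for large $d$).
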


\begin{proof}[Proof sketch]
This follows from modifying the proof of Theorem 3 of \cite{jiang2017uniform} by replacing
$\mathbb{R}^d\backslash  B(x_0, \tilde{r})$ with $B(x_0, \bar{r}) \backslash B(x_0, \tilde{r})$.
This leads us to
\begin{align*}
\inf_{x \in B(x_0, r_n)} \widehat{f}_h(x) > \sup_{x \in B(x_0, \bar{r}) \backslash B(x_0, \tilde{r})} \widehat{f}_h(x),
\end{align*}
where $r_n := \min_{x \in X_{[n]}} |x_0 - x|$ and $n$ is chosen sufficiently large such that $\tilde{r} < \tau$.
Thus, $|x_0 - \hat{x}| \le \tilde{r}$.
\end{proof}

\begin{proof} [Proof of Theorem~\ref{theo:modeestquickshift2}]
Suppose that $x_0 \in \mathcal{M}_{\tau+\epsilon, \delta}^+ \backslash \mathcal{M}_{\tau-\epsilon, \delta}^-$. Let
$\hat{x} := \argmax_{x \in B(x_0, \tau) \cap X_{[n]}} \widehat{f}_h(x)$.
We first show that $\hat{x} \in \widehat{\mathcal{M}}$. 

By Lemma~\ref{lemma:localmode}, we have $|x_0 - \hat{x}| \le \tilde{r}$ where $\tilde{r}^2 := \max \left\{\frac{32\hat{C}}{\check{C}} (\log n)^{4/\rho} h^2, 17\cdot C' \sqrt{\frac{\log n}{n\cdot h^d}} \right\}$.
It remains to show that 
$\hat{x} = \argmax_{x \in B(\hat{x}, \tau) \cap X_{[n]}} \widehat{f}_h(x)$.
We have $B(\hat{x}, \tau) \subseteq B(x_0, \tau + \tilde{r})$. 
Choose $n$ sufficiently large such that (i) $\tilde{r} < \epsilon$, 
(ii) by Lemma~\ref{supnorm}, $\sup_{x \in \mathcal{X}} |\widehat{f}_h(x) - f(x)| < \delta/4$
and (iii) $\tilde{r}^2  < \delta/(4\hat{C})$.
Now, we have 
\begin{align*}
\sup_{x \in B(x_0, \tau + \tilde{r}) \backslash B(x_0, \tau)} \widehat{f}_h(x) &\le  
\sup_{x \in B(x_0, \tau + \tilde{r}) \backslash B(x_0, \tau)} f(x) + \delta/4
\le f(x_0) - 3\delta/4 \\
&\le f(\hat{x}) + \hat{C} \tilde{r}^2 - 3\delta/4 < f(\hat{x}) - \delta/2 <\widehat{f}_h(\hat{x}).
\end{align*}
Thus, $\hat{x} = \argmax_{x \in B(\hat{x}, \tau) \cap X_{[n]}} \widehat{f}_h(x)$. Hence, $\hat{x} \in \widehat{\mathcal{M}}$.

Next, we show that it is unique. To do this, suppose that $\hat{x}' \in \widehat{\mathcal{M}}$ such that $||\hat{x}' - x_0|| \le \tau/2$.
Then we have both $\hat{x} = \argmax_{x \in B(\hat{x}, \tau) \cap X_{[n]}} \widehat{f}_h(x)$ and $\hat{x}' = \argmax_{x \in B(\hat{x}', \tau) \cap X_{[n]}} \widehat{f}_h(x)$.
However, choosing $n$ sufficiently large such that $\tilde{r} < \tau/2$, we obtain $\hat{x}\in B(\hat{x}', \tau)$. This implies that $\hat{x} = \hat{x}'$, as desired.

We now show $|\widehat{\mathcal{M}}| \le |\mathcal{M}| - |\mathcal{M}_{\tau-\epsilon, \delta}^-|$. 
Suppose that $\hat{x} \in \widehat{\mathcal{M}}$.
Let $\tau_0 := \min\{\epsilon/3, \tau/3, r_M/2\}$.
We show that $B(\hat{x}, \tau_0) \cap \mathcal{M} \neq \emptyset$. Suppose otherwise. Let $\lambda = f(\hat{x})$.
By Assumptions~\ref{assumption:levelset_continuity} and~\ref{modeassumption2},
we have that there exists $\sigma > 0$ and $\eta > 0$ such that the following holds uniformly:
$\text{Vol}(B(\hat{x}, \tau_0) \cap L_f(\lambda + \sigma)) \ge \eta$.
Choose $n$ sufficiently large such that (i) by Lemma~\ref{supnorm}, $\sup_{x \in \mathcal{X}} |\widehat{f}_h(x) - f(x)| < \min{\sigma/2, \delta/4}$ and (ii) there exists a sample $x \in B(\hat{x}, \epsilon/3) \cap L_f(\lambda + \sigma) \cap X_{[n]}$ by Lemma 7 of \citet{chaudhuri2010rates}.
Then $\widehat{f}_h(x) > \lambda + \sigma/2 > \widehat{f}_h(\hat{x})$ but $x \in B(\hat{x}, \tau_0)$, a contradiction since $\hat{x}$ is the maximizer of the KDE of the samples
in its $\tau$-radius neighborhood.
Thus, $B(\hat{x}, \tau_0) \cap \mathcal{M} \neq \emptyset$. Now, suppose that there exists $x_0 \in B(\hat{x}, \tau_0) \cap \mathcal{M}_{\tau-\epsilon, \delta}^-$. Then, there exists $x' \in B(x_0, \tau - 2\tau_0)$ such that $f(x') \ge f(x_0) + \delta$.
Then, if $\bar{x}$ is the closest sample point to $x'$, we have for $n$ sufficiently large, $|x' - \bar{x}| \le \tau_0$ and $f(\bar{x}) \ge f(x_0) + \delta/2$ and thus $\widehat{f}_h(\bar{x}) > f(\bar{x}) - \delta/4 \ge f(\hat{x}) + \delta/4 > \widehat{f}_h(\hat{x})$. But $\bar{x} \in B(\hat{x}, \tau) \cap X_{[n]}$, contradicting the fact that $\hat{x}$ is the maximizer of the KDE over samples in its $\tau$-radius neighborhood. Thus, $B(\hat{x}, \tau_0) \cap (\mathcal{M} \backslash \mathcal{M}_{\tau-\epsilon, \delta}^-) \neq \emptyset$.

Finally, suppose that there exists $\hat{x}, \hat{x}' \in \widehat{\mathcal{M}}$ such that $x_0 \in \mathcal{M} \backslash \mathcal{M}_{\tau-\epsilon, \delta}^-$ and $x_0 \in B(\hat{x}, \tau_0) \cap B(\hat{x}', \tau_0)$. Then, $\hat{x}, \hat{x}' \in B(x_0, \tau_0)$, thus $|\hat{x} - \hat{x}'| \le \tau$ and thus $\hat{x} = \hat{x}'$, as desired.
\end{proof}

\subsection*{Cluster Tree Estimation Proofs}

\begin{lemma} [Minimality]  The following holds with probability at least $1 - 1/n$. 
If $A$ is a connected component of $\{x \in \mathcal{X} : f(x) \ge \lambda \}$, 
then $A \cap X_{[n]}$ is contained in the same component in $\widehat{C}_f(\lambda - \epsilon)$ for any $\epsilon > 0$ as $n \rightarrow \infty$.
\end{lemma}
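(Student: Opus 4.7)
\subsection*{Proof plan for Minimality Lemma}

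The plan is to show that any two samples $x,x' \in A \cap X_{[n]}$ end up in the same component of $\widehat{C}_f(\lambda-\epsilon) = \mathrm{Link}(\widetilde{G}(\lambda-\epsilon),\tau)$ by exhibiting a chain of intermediate samples whose successive Quick Shift components are forced to merge under $\mathrm{Link}(\cdot,\tau)$.

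First, I would use the uniform KDE bound (Lemma~\ref{supnorm}) together with the parameter schedule $h\to 0$, $\log n/(nh^d)\to 0$ to guarantee that for $n$ large enough, $\sup_{x \in \mathcal{X}}|\widehat{f}_h(x)-f(x)|<\epsilon/4$ with probability at least $1-1/n$. Since $f(x)\ge\lambda$ on $A$, every sample in $A \cap X_{[n]}$ is automatically a vertex of $G(\lambda-\epsilon)$. More generally, any sample lying in the region $\{f\ge\lambda-\epsilon/2\}$ belongs to $G(\lambda-\epsilon)$, and this slightly enlarged region is what I will use to host the chain between $x$ and $x'$.

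Next, since $A$ is a connected component of $\{f\ge\lambda\}$, I can choose a continuous path $\gamma\subseteq A$ from $x$ to $x'$. I would cover $\gamma$ by a finite sequence of closed balls $B(y_1,\tau/3),\dots,B(y_N,\tau/3)$ with $y_i\in\gamma$, consecutive balls overlapping, $y_1=x$ and $y_N=x'$. Each $y_i$ satisfies $f(y_i)\ge\lambda$, and Hölder continuity makes $f$ at least $\lambda-\epsilon/4$ on all of $B(y_i,\tau/3)$ once $n$ is large enough that $C_\alpha(\tau/3)^\alpha<\epsilon/4$ (using $\tau\to 0$). Invoking the standard sample-density argument (Lemma 7 of \citet{chaudhuri2010rates}), the condition $\tau(n)\ge(\log^2 n/n)^{1/d}$ forces every such ball to contain a sample $z_i$ with probability at least $1-1/n$, uniformly over all balls of this radius whose centers lie in $\mathcal{X}$. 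By construction each $z_i$ has $f(z_i)\ge\lambda-\epsilon/4$, hence $\widehat{f}_h(z_i)>\lambda-\epsilon$, so $z_i$ is a vertex of $G(\lambda-\epsilon)$, and $\|z_i-z_{i+1}\|\le 2\cdot(\tau/3)+\tau/3<\tau$ because consecutive covering balls overlap.

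Finally I would translate the geometric chain into a component-merging argument. Each $z_i$ lies in some component $\widetilde{C}_i$ of $\widetilde{G}(\lambda-\epsilon)$; successive components satisfy $\inf_{a\in \widetilde{C}_i,\,a'\in \widetilde{C}_{i+1}}\|a-a'\|\le\|z_i-z_{i+1}\|<\tau$, so the $\mathrm{Link}(\cdot,\tau)$ operation merges $\widetilde{C}_i$ and $\widetilde{C}_{i+1}$. Transitively, all $z_i$ lie in one component of $\widehat{C}_f(\lambda-\epsilon)$; taking $z_1=x$, $z_N=x'$ then finishes the argument. The main obstacle I anticipate is making the chain construction uniform in the choice of $x,x'\in A\cap X_{[n]}$ and in the connected component $A$: this requires the sample-occupation statement for balls of radius $\tau/3$ to hold simultaneously across a covering of $\mathcal{X}$, which is exactly the uniform form of Lemma 7 of \citet{chaudhuri2010rates} combined with a union bound, and it is also why the assumption $\tau(n)\ge(\log^2 n/n)^{1/d}$ enters essentially.
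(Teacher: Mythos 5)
Your proposal follows essentially the same route as the paper's own proof, which compresses the chain argument into the single claim that every $x \in A$ has a sample $x' \in B(x,\tau/2) \cap X_{[n]}$ with $\widehat{f}_h(x') > \lambda - \epsilon$ (using Lemma 7 of \citet{chaudhuri2010rates} for ball occupancy, H\"older continuity to control the density drop over the shrinking radius, and Lemma~\ref{supnorm} for the KDE error), leaving the $\mathrm{Link}$-merging step implicit; you simply spell that step out. One small arithmetic slip to fix: if consecutive covering balls $B(y_i,\tau/3)$ are only required to overlap, then $\|y_i-y_{i+1}\|$ can be as large as $2\tau/3$ and hence $\|z_i-z_{i+1}\|$ can reach $4\tau/3$ rather than the claimed $2\cdot(\tau/3)+\tau/3<\tau$, so you should instead space the centers strictly less than $\tau/3$ apart (or shrink the ball radius) so that $\|z_i-z_{i+1}\|<\tau$, as needed for the strict inequality in the definition of $\mathrm{Link}$.
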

\vspace{-0.4cm}
\begin{proof}
It suffices to show that for each $x \in A$, there exists $x' \in B(x, \tau/2) \cap X_{[n]}$ such that $\widehat{f}_h(x') > \lambda - \epsilon$.
Given our choice of $\tau$, it follows by Lemma 7 of \cite{chaudhuri2010rates} that $B(x, \tau/2) \cap X_{[n]}$ is non-empty for $n$ sufficiently large.
Let $x' \in B(x, \tau/2) \cap X_{[n]}$. 
Choose $n$ sufficiently large such that by Lemma~\ref{supnorm}, we have $\sup_{x \in \mathcal{X}} |\widehat{f}_h(x) - f(x)| < \epsilon/2$.
We have $f(x') \ge \inf_{B(x, \tau/2)} f(x) \ge \lambda - C_{\alpha}(\tau/2)^\alpha > \lambda - \epsilon/2$, where the last inequality holds for $n$ sufficiently large so that $\tau$ is sufficiently small.
Thus, we have $\widehat{f}_h(x') > \lambda - \epsilon$, as desired.
\end{proof}

\begin{lemma} [Separation]
Suppose that $A$ and $B$ are distinct connected components of $\{ x \in \mathcal{X} : f(x) \ge \lambda\}$ which merge at $\{ x \in \mathcal{X} : f(x) \ge \mu\}$.
Then $A \cap X_{[n]}$ and $B \cap X_{[n]}$ are separated in $\widehat{C}_f(\mu + \epsilon)$ for any $\epsilon > 0$ as $n \rightarrow \infty$.
\end{lemma}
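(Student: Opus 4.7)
The plan is to attach each sample in $\widehat{C}_f(\mu+\epsilon)$ a ``tag'' recording which connected component of $L_f(\mu+\epsilon/2)$ it belongs to, and to show that both the restricted Quick Shift graph and the subsequent Link operation only ever merge samples sharing a tag. Because $A$ and $B$ are distinct components at level $\lambda$ that merge only at level $\mu < \mu + \epsilon/2$, no component of $L_f(\mu+\epsilon/2)$ can contain points from both $A$ and $B$; hence the samples in $A \cap X_{[n]}$ and $B \cap X_{[n]}$ that survive in $\widehat{C}_f(\mu+\epsilon)$ will carry disjoint families of tags, giving separation.

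The key quantitative step is the locality claim: for $n$ large enough, any two samples $x, y \in X_{[n]}$ with $\widehat{f}_h(x), \widehat{f}_h(y) > \mu + \epsilon$ and $\|x-y\| \le \tau$ lie in a common component of $L_f(\mu+\epsilon/2)$. To prove it I would combine Lemma~\ref{supnorm}, which for $n$ large gives $|\widehat{f}_h - f|_\infty < \epsilon/4$ and hence $f(x), f(y) > \mu + 3\epsilon/4$, with H\"older continuity: for any $z$ on the segment $[x, y]$, $f(z) \ge f(x) - C_\alpha \tau^\alpha > \mu + \epsilon/2$ once $C_\alpha \tau^\alpha < \epsilon/4$, which eventually holds since $\tau \to 0$. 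The segment then lies entirely in $L_f(\mu+\epsilon/2)$, certifying that $x$ and $y$ share a tag.

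The conclusion propagates in two stages. First, every edge of $G(\mu+\epsilon)$ joins two samples within $\tau$, so locality plus transitivity shows that each connected component of $G(\mu+\epsilon)$, and hence each element of $\tilde{G}(\mu+\epsilon)$, is contained in a single component of $L_f(\mu+\epsilon/2)$; the tag is therefore well defined on $\tilde{G}(\mu+\epsilon)$. Second, the Link operation merges two sets only when a pair of their members falls within $\tau$; locality then forces those members, and hence the two sets, to share a tag. Induction on the Link iterations yields that every set in $\widehat{C}_f(\mu+\epsilon)$ is confined to a single component of $L_f(\mu+\epsilon/2)$, which by the observation in the first paragraph cannot contain points from both $A$ and $B$.

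The main obstacle is coordinating three vanishing quantities, namely the Link radius $\tau$, the H\"older density drop $C_\alpha \tau^\alpha$ on segments of length $\tau$, and the KDE deviation from Lemma~\ref{supnorm}, so that all sit below $\epsilon/4$ simultaneously for $n$ large. A subtler but more cosmetic point is that the claim ``no component of $L_f(\mu+\epsilon/2)$ contains points from both $A$ and $B$'' implicitly uses Assumption~\ref{assumption:levelset_continuity} to rule out pathological thin bridges that could let the components touch at levels infinitesimally above $\mu$.
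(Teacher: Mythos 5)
Your proof is correct in substance but reaches the conclusion by a different mechanism than the paper. The paper's proof builds a ``moat'': it takes the component $A'$ of $L_f(\mu+\epsilon/2)$ containing $A$, uses uniform continuity of $f$ to find $\tilde{r}$ with $A + B(0,3\tilde{r}) \subseteq A'$, extracts a quantitative density gap $\epsilon'$ on the shell $A' \setminus (A+B(0,\tilde{r}))$ by compactness, and then argues that once the KDE error is below $\epsilon'/2$ no path with edges shorter than $\tilde{r}$ can escape $A+B(0,\tilde{r})$. Your proof instead establishes an edge-locality invariant: any two samples of empirical density above $\mu+\epsilon$ within distance $\tau$ of each other lie in a common component of $L_f(\mu+\epsilon/2)$, proved by running H\"older continuity along the connecting segment, and then propagates this invariant through both the components of $G(\mu+\epsilon)$ and the Link merges by induction. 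Your route is arguably cleaner: it treats the graph edges and the Link step with a single lemma, it does not need the compactness argument producing $\epsilon'$, and it is insensitive to the possibility that $A'$ contains further components of the level-$(\mu+\epsilon)$ set besides $A$ (a case the paper's moat argument handles only implicitly). The one caveat is your reliance on the segment $[x,y]$ staying inside $\mathcal{X}$, where H\"older continuity is assumed to hold; if the support is not locally convex near the relevant level sets the density could in principle vanish on part of the segment. The paper's argument only evaluates $f$ on subsets of $A'$, which lies in $\mathcal{X}$ by construction, so it sidesteps this; in your write-up you should either note that $L_f(\mu+\epsilon/2)$ is contained in the interior of $\mathcal{X}$ for the relevant levels, or replace the straight segment by a short path inside the level set. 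This is a technicality comparable to others the paper leaves implicit, not a structural flaw.
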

\vspace{-0.4cm}

\begin{proof}
It suffices to assume that $\lambda = \mu + \epsilon$.
Let $A'$ and $B'$ be the connected components of $\{ x \in \mathcal{X} : f(x) \ge \mu + \epsilon/2\}$ which contain $A$ and $B$ respectively. 
By the uniform continuity of $f$, there exists $\tilde{r} > 0$ such that $A + B(0, 3\tilde{r}) \subseteq A'$. 
We have $\sup_{x \in A' \backslash (A + B(0, \tilde{r}))} f(x) = \mu + \epsilon - \epsilon'$ for some $\epsilon' > 0$.

Choose $n$ sufficiently large such that by Lemma~\ref{supnorm}, we have $\sup_{x \in \mathcal{X}} |\widehat{f}_h(x) - f(x)| < \epsilon'/2$.
Thus, $\sup_{x \in A' \backslash (A + B(0, \tilde{r}))} \widehat{f}_h(x)  < \mu + \epsilon - \epsilon'/2$.
Hence, points in $\widehat{C}_f(\mu + \epsilon)$ cannot belong to $A' \backslash \left(A + B(0, \tilde{r})\right)$. 
Since $A'$ also contains $A + B(0, 3\tilde{r})$, it means that there cannot be a path from $A$ to $B$ with points 
of empirical density at least $\mu + \epsilon$ with all edges of length less than $\tilde{r}$.
The result follows by taking $n$ sufficiently large so that $\tau < \tilde{r}$, as desired.
\end{proof}

\begin{proof}[Proof of Theorem~\ref{theo:clustertree}]
By the regularity assumptions on $f$ and Theorem 2 of \cite{eldridge2015beyond}, we have that Algorithm~\ref{alg:quickshiftextract} has 
both uniform minimality and uniform separation (defined in \cite{eldridge2015beyond}), which implies convergence in merge distortion.
\end{proof}

\subsection*{Modal Regression Proofs}

\begin{proof}[Proof of Theorem~\ref{theo:modalconst}]
There are two directions to show. (1) if $\hat{y} \in \widehat{M}_x$ then $\hat{y}$ is a consistent estimator of some
 mode $y_0 \in \mathcal{M}_x$.
(2) For each mode, $y_0 \in \mathcal{M}$, there exists a unique $\hat{y} \in \widehat{\mathcal{M}}$ which estimates it.

We first show (1). We show that $[\hat{y}- \tau, \hat{y}+\tau] \cap \mathcal{M}_x \neq \emptyset$. Suppose otherwise. Let $\lambda = f(x, \hat{y})$. 
Choose $\sigma < \tau/4$.
Then by Assumptions~\ref{assumption:levelset_continuity} and~\ref{modeassumption2}, there exists $\eta > 0$ such that
taking $\epsilon = \tau/2$, we have that there exists $\delta > 0$ such that
$\{ (x, y') : y' \in [\hat{y} -\tau, \hat{y}+\tau] \} \cap L_f(\lambda + \delta)$ contains connected set $A$ where $\text{Vol}(A) > \eta$. 
Choose $n$ sufficiently large such that (i) there exists $y \in A \cap Y$, and (ii) by Lemma~\ref{supnorm}, $\sup_{(x', y')} |\widehat{f}_h(x',y') - f(x',y')| < \delta/2$.
Then $\widehat{f}_h(x, y) > \lambda + \delta/2 > \widehat{f}_h(x, \hat{y})$ but $y \in [\hat{y}- \tau, \hat{y}+\tau]$, a contradiction since $\hat{y}$ is the maximizer of the KDE
in $\tau$ radius neighborhood when restricted to $X=x$.
Thus, there exists $y_0 \in \mathcal{M}_x$ such that $y_0 \in [\hat{y}- \tau,\hat{y}+\tau]$. Moreover this $y_0 \in \mathcal{M}_x$ must be unique by Lemma~\ref{modereg}.
As $n\rightarrow 0$, we have $\tau \rightarrow 0$ and thus consistency is established for $\hat{y}$ estimating $y_0$.

Now we show (2). Suppose that $y_0 \in \mathcal{M}_x$.
From the above, for $n$ sufficiently large, the maximizer of the KDE in $[y_0- 2\tau, y_0+2\tau] \cap Y$ is contained in $[y_0- \tau,y_0+\tau]$.
Thus, there exists a root of the tree contained in $[y_0 - \tau, y_0+\tau]$ and taking $\tau\rightarrow 0$ gives us the desired result.
\end{proof}

\section*{Acknowledgements}
I thank the anonymous reviewers for their valuable feedback.

{
\bibliography{paper}
\bibliographystyle{plainnat}
}

\end{document}